\icmltitlerunning{Lower Bounds on Cross-entropy Loss in the Presence of Test-time Adversaries}
\newtheorem{theorem}{Theorem}
\newtheorem{lemma}{Lemma}
\newtheorem{definition}{Definition}
\newcommand{\bcomment}[1]{}
\newcommand{\TODO}[1]{}
\newcommand{\E}{\mathbb{E}}
\newcommand{\R}{\mathbb{R}}
\newcommand{\twobyone}[2]{\begin{pmatrix} #1 \\ #2 \end{pmatrix}}
\newcommand{\diag}[1]{\operatorname{diag}(#1)}
\newcommand{\mcA}{\mathcal{A}}
\newcommand{\mcB}{\mathcal{B}}
\newcommand{\mcE}{\mathcal{E}}
\newcommand{\mcG}{\mathcal{G}}
\newcommand{\mcV}{\mathcal{V}}
\newcommand{\mcX}{\mathcal{X}}
\newcommand{\mcY}{\mathcal{Y}}
\newcommand{\one}{\mathbf{1}}
\newcommand{\zero}{\mathbf{0}}
\newcommand{\multisetds}[2]{\bigg(\kern-.4em\binom{#1}{#2}\kern-.4em\bigg)}
\newcommand{\multisetin}[2]{\big(\kern-.3em\binom{#1}{#2}\kern-.3em\big)}
\newcommand{\multisetix}[2]{\left(\kern-.2em\binom{#1}{#2}\kern-.2em\right)}
\begin{document}

\twocolumn[
\icmltitle{Lower Bounds on Cross-Entropy Loss in the Presence of Test-time Adversaries}



\icmlsetsymbol{equal}{*}

\begin{icmlauthorlist}
\icmlauthor{Arjun Nitin Bhagoji}{equal,uchi}
\icmlauthor{Daniel Cullina}{equal,ps}
\icmlauthor{Vikash Sehwag}{pr}
\icmlauthor{Prateek Mittal}{pr}
\end{icmlauthorlist}

\icmlaffiliation{uchi}{Department of Computer Science, University of Chicago}
\icmlaffiliation{ps}{Department of Electrical and Computer Engineering, Pennsylvania State University}
\icmlaffiliation{pr}{Department of Electrical Engineering, Princeton University}

\icmlcorrespondingauthor{Arjun Nitin Bhagoji}{abhagoji@uchicago.edu}

\icmlkeywords{Machine Learning, ICML}

\vskip 0.3in
]



\printAffiliationsAndNotice{\icmlEqualContribution} 




	

\begin{abstract}
Understanding the fundamental limits of robust supervised learning has emerged as a problem of immense interest, from both practical and theoretical standpoints. In particular, it is critical to determine classifier-agnostic bounds on the training loss to establish when learning is possible. In this paper, we determine optimal lower bounds on the cross-entropy loss in the presence of test-time adversaries, along with the corresponding optimal classification outputs. Our formulation of the bound as a solution to an optimization problem is general enough to encompass any loss function depending on soft classifier outputs. We also propose and provide a proof of correctness for a bespoke algorithm to compute this lower bound efficiently, allowing us to determine lower bounds for multiple practical datasets of interest. We use our lower bounds as a diagnostic tool to determine the effectiveness of current robust training methods and find a gap from optimality at larger budgets. Finally, we investigate the possibility of using of optimal classification outputs as soft labels to empirically improve robust training. 
\end{abstract}

\section{Introduction}\label{sec: intro}
 The robustness of machine learning systems, particularly classifiers in the supervised setting, to adversarial perturbations \citep{szegedy2013intriguing,goodfellow2014explaining,carlini2017towards,bhagoji2018practical,madry_towards_2017} has become an important line of research owing to the critical role they play in society. While there is a tremendous amount of work on attacks and defenses \cite{papernot2016towards}, a focus of recent research \cite{bhagoji2019lower,pmlr-v97-dohmatob19a,schmidt2018adversarially,cullina2018pac,mahloujifar2019curse,diochnos2018adversarial} has been on establishing fundamental bounds on learning in the presence of test-time adversaries in various settings. One line of research \cite{bhagoji2019lower,pmlr-v97-dohmatob19a,pmlr-v119-pydi20a} into the limits of learning in the presence of test-time attackers has established classifier-agnostic lower bounds on adversarial robustness, i.e. the minimum $0-1$ loss that would be incurred by any classifier, when adversarial perturbations are added to the underlying data distribution. However, practical approaches to training classifiers such as neural networks usually use surrogate loss functions such as the cross-entropy loss that depend on the output confidence, and it is critical to establish bounds on these.
 
 
Thus, in this paper, we extend work on the information-theoretic limits of learning in the presence of test-time adversaries to \textit{any} loss function that uses the output probabilities of a classifier, such as the cross-entropy loss. The key question this paper answers is:

\emph{What is the minimum possible cross-entropy loss that will be incurred by any classifier given a data distribution and adversary specification?} 

Answering this question enables us to quantitatively diagnose the effectiveness of practical defenses against adversarial examples~\cite{madry_towards_2017, zhang2019theoretically}, and can inform the design of better learning algorithms. In particular, we can determine if current robust optimization techniques are able to recover these bounds as well as find regimes in which robust classification is not possible.


To determine classifier-agnostic lower bounds on the cross-entropy loss, we focus on the interaction between data points when they are perturbed. We represent points from each class as the vertices of a graph, with edges existing between two vertices if the neighborhoods in which they can be perturbed overlap. We refer to this structure as a \emph{conflict graph}. Data points connected by edges are then challenging to classify, even for the optimal classifier. The problem is then translated to one of finding the output probabilities of the optimal classifier over this graph. Minimizing the cross entropy loss over this graph determines these probabilities and provides a lower bound, which can be efficiently computed as the resulting optimization problem
is convex. This quantity, known as the \emph{graph entropy} \cite{korner1973coding}, has independently appeared in information theory as the solution to a coding problem. We also determine an exact form for the lower bound on cross-entropy for a mixture of two Gaussians, along with the optimal classifer and adversarial strategy.


An efficient determination of these lower bounds is possible since the optimization problem
is convex, but we find that existing solvers are prohibitively slow for the programs resulting from real-world distributions of interest. In light of this, we derive a custom algorithm that exploits the bipartite structure of the conflict graph and can determine bounds far faster than a generic convex solver for instantiations of interest. Our algorithm can find a solution in $10$s of seconds for benchmark datasets such as MNIST \cite{lecun1998mnist}, Fashion MNIST \cite{xiao2017/online} and CIFAR-10 \cite{krizhevsky2009learning}. We provide a proof of correctness and convergence for our algorithm.


We use our algorithm to find lower bounds on the cross-entropy loss for these benchmark datasets, as well as for synthetic Gaussian data. Comparing these bounds to the training loss obtained by state-of-the-art robust optimization techniques on commonly used deep neural networks, we find a gap in terms of convergence to the optimal loss. Interestingly, the gap is much larger for the $0-1$ loss than for the cross-entropy loss, indicating that the use of a surrogate loss does impact achievability but is not the sole reason for it. We examine the impact of model architectures and activation functions on this gap, finding that the former aids convergence while the latter has a negligible impact. Finally, for certain adversarial budgets, we find that the use of soft labels obtained from our framework during training can aid with both convergence and generalization. The code to reproduce all results in this paper is available at \url{https://github.com/arjunbhagoji/log-loss-lower-bounds}.

\subsection{Summary of Contributions}

\noindent \textbf{General framework for lower bounds for all convex losses using output probabilities in the presence of test-time adversaries:}
Our problem formulation allows us to determine lower bounds on any loss function for a given dataset and adversary. In particular, we can compute lower bounds on the commonly used cross-entropy loss as well as the the optimal classification probabilities for all points.

\noindent \textbf{Efficient determination of optimal log-loss}: We propose a bespoke algorithm to compute these lower bounds and provide a proof of its correctness. For practical settings of interest, our algorithm provides a speedup of multiple orders of magnitude over a generic convex solver from CVXOPT \cite{andersen2013cvxopt}.

\noindent \textbf{Analyzing the effectiveness of current robust training methods}: Our framework enables us to determine regimes where robust classification is possible. In these regimes, we find that current robust training techniques are able to get close to, but not match, the lower bounds on cross-entropy loss. This gap is smaller than that for the $0-1$ loss observed in previous work, showing the impact of using surrogate losses. We also investigate the use of the optimal classification probabilities computed by our framework as soft-labels during training, and find that these aid in both convergence and generalization for certain adversarial budgets.

\section{Lower Bounds on Cross-Entropy Loss}\label{sec: log_loss}
In this section, we derive lower bounds on the cross-entropy loss in the presence of a test-time attacker by demonstrating that it is the solution to a convex optimization problem. We show how this problem can be derived using a graphical interpretation of the classification problem. Our method applies to \emph{all discrete two-class distributions as well as all adversaries perturbing points within a non-empty neighborhood}. We also extend our framework to the special case of a mixture of two Gaussians.

\subsection{Problem formulation}

We consider the following supervised classification problem. Data points $x$ are drawn from a space $\mcX$, with labels $y \in \mcY=\{-1,1\}$. The joint probability distribution over this data is $P$. The classification function (or classifier) $f: \mcX \rightarrow \mcY$ maps data points to the space of labels. We also define a `soft' classifier $h: \mcX \rightarrow [0,1]^{\mcY}$ that maps data points to a metric of their confidence of being in a class. The index of the maximum element of $h$ recovers $f$. This approach is followed in classification algorithms such as logistic regression and neural networks \cite{shalev-shwartz_understanding_2014}.

\noindent \textbf{Test-time adversary:} We consider a test-time adversary that can modify any data point to generate an adversarial example \cite{goodfellow2014explaining,szegedy2013intriguing,carlini2017towards} within a neighborhood, i.e. $\tilde{x}=N(x)$, where $\tilde{x}$ is the adversarial example and $N(\cdot)$ is a non-empty neighborhood function. This general definition includes the $\ell_p$ family of constraints most widely used in previous work.

\noindent \textbf{Loss functions and robust training:} To obtain a classifier robust to test-time adversaries, $f$ must be trained to minimize the robust $0-1$ loss, defined as $\mathbb{E} \left[\tilde{\ell}_{0-1}(f,(x,y)) \right]=\mathbb{E} \left[ sup_{\tilde{x} \in N(x)} \bm{1} \left( f(\tilde{x}) \neq y \right) \right]$. However, since the $0-1$ loss is non-differentiable, surrogate losses that are differentiable and upper bound it are used in practice. One of the most common is the cross-entropy or log loss, defined as $\ell^{\text{CE}}(h,v)=-\log h(x)_y$ for a $2$-class problem, where $v=(x,y)$ and $h(x) \in [0,1]^{\mcY}$ is the probability distribution over $\mcY$ that the soft classifier $h$ assigns to $x$. The robust classification problem using a surrogate loss $\ell$ is then
\begin{multline}
    	\min_h \mathbb{E}_P \left[sup_{\tilde{x} \in N(x)} \ell^{\text{CE}}(h,(\tilde{x},y)) \right] \\= \min_h \mathbb{E}_P \left[\tilde{\ell}^{\text{CE}}(h,(x,y)) \right] 
\end{multline}
The robust cross-entropy loss is of particular interest in the robust training of neural networks \cite{madry_towards_2017}.

\noindent \textbf{Problem Statement:} Our aim is to determine the value of $\min_h \mathbb{E}_P [\tilde{\ell}^{\text{CE}}(h,(x,y)) ] $ over all measurable functions $h$, for a given discrete, two class distribution $P$ and neighborhood function $N(\cdot)$.

\subsection{Lower bound as the solution to a convex program}
We first define a conflict graph in order to cast the problem of finding the lower bound as an optimization problem over the vertices of this graph. Then, we show that the feasible set of output probabilities is determined by the edge incidence matrix of the conflict graph. Finally, we determine the lower bound on the cross-entropy loss by minimizing over this feasible set.

\noindent \textbf{Conflict graph:} We define a conflict graph $\mathcal{G}=(\mcV,\mcE)$ that accounts for intersections between the neighborhoods of points from different classes. Each neighborhood represents the set of points reachable by the adversary from point $x$.
Let $\mcV \subseteq \mcX \times \mcY$ be the support of the distribution $P$. This means that each labeled data point $(x,y)$ with strictly positive probability in $P$ is represented as a vertex $v$. Since we consider a binary classification problem, the conflict graph is bipartite. Each part of the graph is $\mcV_c = \mcV \cap(\mcX \times \{c\})$, where $c \in \{-1,1\}$. The edge $((x,1),(x',-1))$ is present if and only if $N((x,1)) \cap  N((x',-1))$ is nonempty. There are no edges between vertices in the same part of the graph.



\TODO{Properly move assumptions from thm to here}

\begin{definition}
For a soft classifier $h$, the correct-classification probability $q_v$ that it can achieve on an example $v = (x,y)$ in the presence of an adversary is
\[
    q_v=\inf_{\tilde{x} \in N(x)} h(\tilde{x})_y.
\]
\end{definition}

\begin{lemma}[Feasible output probabilities]
Let $q \in \R^{\mcV}$ be the vector of correct-classification probabilities obtained by a classifier. 
The feasible set of such probabilities is
\begin{equation}
\begin{aligned}
& q \geq \zero\\
& M q \leq \one. \label{conflict-constaints}
\end{aligned}
\end{equation}
where $M = \twobyone{E}{I} \in \R^{(\mcE \sqcup \mcV) \times \mcV}$ and $E \in \R^{\mcE \times \mcV}$ is the edge incidence matrix of the conflict graph.
\label{lemma: feasible_q}
\end{lemma}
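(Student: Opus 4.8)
The plan is to prove the claimed set equality by establishing both inclusions: every correct-classification vector $q$ that some soft classifier $h$ induces satisfies the stated inequalities (necessity), and conversely every $q$ in the polytope is induced by some classifier (sufficiency). First I would unpack the matrix notation. Since $M=\twobyone{E}{I}$, the constraint $Mq\le\one$ splits into the edge constraints $(Eq)_e = q_u+q_w\le 1$ for every edge $e=(u,w)\in\mcE$, together with $q_v\le 1$ for every vertex from the block $Iq\le\one$. Combined with $q\ge\zero$, this says exactly that each coordinate lies in $[0,1]$ and that the two endpoints of each conflict edge have correct-classification probabilities summing to at most $1$.

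For \textbf{necessity}, fix a classifier $h$ and the induced values $q_v=\inf_{\tilde x\in N(x)}h(\tilde x)_y$. Because $h(\tilde x)_y\in[0,1]$, the infimum lies in $[0,1]$, giving $\zero\le q\le\one$. For an edge $e=((x,1),(x',-1))$, the defining property of the conflict graph is that $N(x)$ and $N(x')$ share some point $z$. At that point $q_{(x,1)}\le h(z)_1$ and $q_{(x',-1)}\le h(z)_{-1}$, and since $h(z)$ is a distribution over the two labels, $h(z)_1+h(z)_{-1}=1$; adding the two inequalities yields $q_{(x,1)}+q_{(x',-1)}\le 1$, which is precisely the edge constraint. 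This direction is routine.

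The substance lies in \textbf{sufficiency}: given $q$ in the polytope, construct an $h$ realizing it. The natural construction sets, at each point $z\in\mcX$, $h(z)_1 := \max\{q_{(x,1)} : (x,1)\in\mcV_1,\ z\in N(x)\}$, with the max over the empty set taken to be $0$, and $h(z)_{-1} := 1-h(z)_1$. I would first check this is a valid soft classifier: the bound $q_v\le 1$ keeps $h(z)_1\le 1$, and a maximum of nonnegative numbers keeps it $\ge 0$, so $h(z)\in[0,1]^{\mcY}$. The crux is verifying that this simultaneously respects the class-$(-1)$ requirements, i.e.\ that $h(z)_1 \le 1-q_{(x',-1)}$ whenever $z\in N(x')$ for a class-$(-1)$ vertex $(x',-1)$. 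This is exactly where the edge constraints enter: if $z\in N(x)\cap N(x')$ then that pair forms an edge, so $q_{(x,1)}+q_{(x',-1)}\le 1$, and hence every term entering the maximum defining $h(z)_1$ is bounded by $1-q_{(x',-1)}$. Finally I would confirm that the induced correct-classification probabilities dominate $q$: for $(x,1)$, every $z\in N(x)$ has $h(z)_1\ge q_{(x,1)}$ (since $(x,1)$ is one of the competitors in the max), so the infimum is $\ge q_{(x,1)}$, and symmetrically for class $-1$.

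The main obstacle I anticipate is in this sufficiency step, specifically whether the induced infima equal $q$ exactly or merely dominate it, and handling points lying in several neighborhoods at once. Since the expected cross-entropy objective is monotonically decreasing in each coordinate $q_v$ (the per-example loss is $-\log q_v$), domination is enough to make the polytope the effective feasible region for the subsequent minimization; for the exact set equality I would observe that the achievable vectors are downward closed, so that decreasing an over-achieved coordinate only relaxes the constraints and keeps $h$ valid. I would also take care to distinguish the infimum being attained versus merely approached, and to fix the convention for points outside every class-$1$ neighborhood.
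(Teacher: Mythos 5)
Your proposal is correct and follows essentially the same route as the paper: the identical edge-constraint argument for necessity, and the same construction $h(\tilde x)_1=\sup_{u:\tilde x\in N(u)}q_u$, $h(\tilde x)_{-1}=1-h(\tilde x)_1$ for sufficiency. Your added care about the empty-set convention and about domination versus exact equality (the paper's own proof likewise only establishes $\inf h \geq q$) is a welcome refinement but does not change the argument.
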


\begin{proof}
  Suppose that $(u,v) \in \mcE$.
  Then, there is some $\tilde{x} \in N(u) \cap N(v)$.
  We have $q_u \leq h(\tilde{x})_1$, $q_v \leq h(\tilde{x})_{-1}$, and $h(\tilde{x})_1 + h(\tilde{x})_{-1} = 1$.
  Combining these gives the constraint in \eqref{conflict-constaints} indexed by $(u,v)$.
  
  Now, we will show that each vector $q$ in the polytope is achievable by some $h$.
  Let $h(\tilde{x})_1 = \sup_{u : \tilde{x} \in N(u)} q_u$ and $h(\tilde{x})_{-1} = 1 - h(\tilde{x})_1$. Then,
  
  $\inf_{\tilde{x} \in N(u)} h(\tilde{x})_{1} = \inf_{\tilde{x} \in N(u)} \sup_{u' : \tilde{x} \in N(u')} q_{u'} \geq \inf_{\tilde{x} \in N(u)} q_u = q_u$

  The output when the true example is $v$ is
  $\inf_{\tilde{x} \in N(v)} h(\tilde{x})_{-1} = \inf_{\tilde{x} \in N(v)} (1 - \sup_{u : \tilde{x} \in N(u)} q_u) = \inf_{u : \exists \tilde{x} \in N(u) \cap N(v)} (1 - q_u) \geq q_v.$
  
\end{proof}

In the non-adversarial case, all constraints in \eqref{conflict-constaints} are of the forms $q_v \leq 1$ and $q_{(x,1)} + q_{(x,-1)} \leq 1$. 
Non-trivial adversaries lead to constraints between the probabilities achieved for distinct examples.

Having determined the feasible set of output probabilities, we can now determine the minimum possible cross-entropy loss by minimizing it over this feasible set.

\begin{theorem}[Lower bound on cross-entropy loss]
	The discrete joint probability distribution $P$ over data from two classes, and the neighborhood function $N(\cdot)$ define a bipartite conflict graph $\mathcal{G}$ with incidence matrix $E$.
	Let $p \in \R^{\mcV}$ with $p_v = P(\{v\})$.
	Let $q^*$ be the minimizer of the following program:
	\begin{equation}
	\begin{aligned}
		\min_{q}& \sum_{v : p_v > 0} - p_v \log q_v \\
		\text{s.t.}& \,  q \geq \zero \\
		& M q \leq \one.
		\label{ce_lower_bound}
	\end{aligned}
	\end{equation}
	Then, there is a classifier $h^*$ that achieves the correct-classification probabilities $q^*$ and for all $h$,
	$\mathbb{E}_{P}[\tilde{\ell}^{\text{CE}}(h^*,v)] \leq \mathbb{E}_{P}[\tilde{\ell}^{\text{CE}}(h,v)]$.
\label{thm: log_loss_lower_bound}
\end{theorem}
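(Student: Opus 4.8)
The plan is to show that the robust cross-entropy loss of a classifier depends on the classifier only through its vector $q$ of correct-classification probabilities, that in this variable the loss coincides exactly with the objective of the program \eqref{ce_lower_bound}, and then to pair this identity with the feasibility characterization of Lemma~\ref{lemma: feasible_q}. Once these two facts are in place, both the lower bound and its achievability fall out immediately.

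First I would rewrite the robust loss in terms of $q$. For a fixed example $v = (x,y)$, since $-\log$ is continuous and strictly decreasing, $\sup_{\tilde{x} \in N(x)} \bigl(-\log h(\tilde{x})_y\bigr) = -\log\bigl(\inf_{\tilde{x} \in N(x)} h(\tilde{x})_y\bigr) = -\log q_v$. Summing against $p$ gives $\E_P[\tilde{\ell}^{\text{CE}}(h,v)] = \sum_{v : p_v > 0} -p_v \log q_v$, which is precisely the objective of \eqref{ce_lower_bound} evaluated at the $q$ induced by $h$. Thus the map $h \mapsto q$ converts the minimization over measurable classifiers into a minimization of the program objective over induced $q$-vectors.

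Next I would invoke Lemma~\ref{lemma: feasible_q} in both directions. Its forward direction shows every classifier $h$ induces a $q$ satisfying $q \geq \zero$ and $Mq \leq \one$, so the loss of any $h$ equals $\sum_{v : p_v > 0} -p_v \log q_v$ for a feasible point, which is at least the optimal value of \eqref{ce_lower_bound}; this gives the lower bound. Its converse direction supplies, for the minimizer $q^*$, a classifier $h^*$ whose induced correct-classification probabilities are at least $q^*$ coordinatewise. Since the objective is monotone decreasing in each $q_v$, the loss of $h^*$ is at most $\sum_{v : p_v > 0} -p_v \log q^*_v$, the optimal value; combined with the lower bound, $h^*$ attains the minimum over all measurable $h$.

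Two points need care. The objective is extended-real-valued, with $-\log q_v \to +\infty$ as $q_v \to 0$ when $p_v > 0$, so I would first argue a minimizer $q^*$ exists: the feasible set is a polytope contained in $[0,1]^{\mcV}$ by the block $I$ of $M$, hence compact, and the objective is convex and lower semicontinuous, so the infimum is attained and is finite (for instance $q = \tfrac12 \one$ is feasible, since each edge row of $E$ contributes $q_u + q_v = 1$). The main obstacle is the interchange of $\sup$ and $-\log$ in the first step, which I must justify even when the supremum is not attained; this follows from monotonicity and continuity of $-\log$. A related subtlety is that the converse direction of Lemma~\ref{lemma: feasible_q} only guarantees $q^*_v \le \inf_{\tilde{x} \in N(x)} h^*(\tilde{x})_y$ coordinatewise rather than equality, which is exactly why I phrase achievability through monotonicity of the objective rather than by claiming $h^*$ reproduces $q^*$ exactly.
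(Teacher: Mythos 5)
Your proof is correct and follows essentially the same route as the paper: reduce the minimization over classifiers to the convex program via Lemma~\ref{lemma: feasible_q}, using its forward direction for the lower bound and its achievability direction for attainment. You supply details the paper's terse proof omits (the $\sup$/$-\log$ interchange, compactness of the feasible set, and the fact that the constructed $h^*$ only dominates $q^*$ coordinatewise, handled via monotonicity of the objective), all of which are handled correctly.
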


\begin{proof}
From Lemma~\ref{lemma: feasible_q}, we know that the constraints in Eq.\eqref{conflict-constaints} represent the feasible set of all possible $q$. Further, there exists some $h$ that achieves each $q$. The objective function must have a minimum in the feasible set. Additionally, the objective is convex and the constraints are linear, leading to a convex program.
\end{proof}

We note that a modification of the program above can be used to derive the minimum $0-1$ loss for discrete distributions by setting $\min_q \sum_{v} p^\intercal q$ as the objective function.

\begin{lemma}[Properties of an optimal $q$]
\label{lemma: convex-duality}
  Suppose we have $q$ and $z$ such that
  \begin{align}
    q &\geq \zero \label{q-pos}\\
    Mq &\leq \one \label{q-pack}\\
    z & \geq \zero \label{z-pos}\\
    \diag{q} M^{\top}z &\geq p \label{z-cover}\\
    \one^{\top}z &\leq \one^{\top}p \label{strong-duality}.
  \end{align}
 Then, $q$ is optimal in \eqref{ce_lower_bound}.
\end{lemma}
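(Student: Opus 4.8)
The plan is to prove optimality by a weak-duality argument built on the first-order convexity inequality, treating the vector $z$ as a dual certificate. The objective $F(q) = \sum_{v : p_v > 0} -p_v \log q_v$ is convex and differentiable on the positive orthant, with $\nabla F(q)_v = -p_v/q_v$ for $v$ in the support of $p$ and $0$ otherwise. Since $F$ is convex, for every feasible $q'$ (that is, $q' \geq \zero$ and $Mq' \leq \one$) the gradient inequality gives $F(q') \geq F(q) + \nabla F(q)^{\top}(q' - q)$, so it suffices to show that the linear term $\nabla F(q)^{\top}(q'-q)$ is nonnegative. The conditions \eqref{z-pos}--\eqref{strong-duality} are exactly what is needed to certify this, and I expect the whole argument to reduce to chaining a few inequalities in the right direction.

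First I would record that $q_v > 0$ for every $v$ with $p_v > 0$: if $q_v = 0$ then the $v$-coordinate of \eqref{z-cover} reads $0 \geq p_v > 0$, a contradiction. Hence $F(q)$ is finite and $\nabla F(q)$ is well-defined. Next I would compute the two pieces of the linear term. The ``$q$'' piece is $-\nabla F(q)^{\top} q = \sum_{v:p_v>0} (p_v/q_v)\, q_v = \sum_{v:p_v>0} p_v = \one^{\top}p$. For the ``$q'$'' piece, dividing \eqref{z-cover} by $q_v > 0$ yields $p_v/q_v \leq (M^{\top}z)_v$ on the support; since the incidence structure makes $M$ nonnegative and $z \geq \zero$ by \eqref{z-pos}, each coordinate of $M^{\top}z$ is nonnegative, so the support-restricted sum can be enlarged to all of $\mcV$ without decreasing it. This gives $\sum_{v:p_v>0} (p_v/q_v)\, q'_v \leq \sum_{v} (M^{\top}z)_v\, q'_v = z^{\top} M q' \leq z^{\top}\one = \one^{\top} z \leq \one^{\top} p$, where the middle inequality uses $Mq' \leq \one$ together with $z \geq \zero$, and the last step is \eqref{strong-duality}.

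Combining these, $\nabla F(q)^{\top}(q'-q) = -\sum_v (p_v/q_v) q'_v + \one^{\top}p \geq -\one^{\top}p + \one^{\top}p = 0$, so $F(q') \geq F(q)$ for every feasible $q'$, which is optimality of $q$ in \eqref{ce_lower_bound}. The main obstacle is purely bookkeeping around signs and index sets: I must keep the sum over the support $\{v : p_v > 0\}$ aligned with the full index set $\mcV$, use nonnegativity of $M$, $z$, and $q'$ in the correct directions, and justify passing from the support sum to $z^{\top} M q'$. If $q'$ has a zero coordinate on the support the objective is $+\infty$ and the claim is trivial, so I may assume $q'_v > 0$ there; I would flag this edge case explicitly to keep the gradient inequality valid.
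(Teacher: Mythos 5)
Your proposal is correct and follows essentially the same route as the paper's proof: both use the first-order convexity (linear upper bound on $\log$) at $q$ together with the chain $\sum_{v:p_v>0}\frac{p_v}{q_v}q'_v \leq z^{\top}Mq' \leq \one^{\top}z \leq \one^{\top}p$ to certify optimality via the dual vector $z$. Your handling of the edge cases ($q_v>0$ on the support directly from \eqref{z-cover}, and enlarging the sum using nonnegativity of $M$, $z$, $q'$) is if anything slightly more explicit than the paper's.
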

\begin{proof}

From \eqref{z-cover} we have $\one^{\top} p \leq q^{\top}M^{\top}z$ and from the \eqref{q-pack} we have $z^{\top} M q \leq z^{\top} \one$.
Then \eqref{strong-duality} implies $\one^{\top}z = \one^{\top}p = q^{\top}M^{\top}z$.
Furthermore $p_v = (\diag{q}M^{\top}z)_v = q_v(M^{\top}z)_v$.

There is always some feasible $q$ that makes the objective function finite, so $q^*_v = 0$ implies $p_v = 0$.
For $q^*_v > 0$, the upper bound on $\log q_v$ from the linear approximation at $q^*_v$ is $\frac{q_v-q^*_v}{q^*_v} + \log q^*_v$.
Thus
\begin{multline*}
  \sum_{v : p_v > 0} p_v \log q_v
  \leq \sum_{v : p_v > 0} p_v \left(\frac{q_v-q^*_v}{q^*_v} + \log q^*_v\right)\\
  = \sum_{v : p_v > 0} \frac{p_v}{q^*_v} q_v - \one^{\top}p + \sum_{v : p_v > 0} p_v \log q^*_v. 
\end{multline*}

To prove $\sum_v -p_v \log q_v \geq \sum_v -p_v \log q^*_v$ for all $q$, we need $\sum_v \frac{p_v}{q^*_v} q_v \leq \one^{\top}p$.
To show this, we note that $z^{\top} M q \leq z^{\top} \one$ and $\one^{\top} z \leq \one^{\top} p$. Then, we only need that $(z^{\top}M)_v \geq \frac{p_v}{q^*_v}$, which follows from $\diag{q} M^{\top}z = p$.
\end{proof}

The vector $z$ in Lemma~\ref{lemma: convex-duality} can be interpreted as the optimal strategy followed by the adversary.

\subsection{Gaussian Case} \label{subsec: gauss_case}
We now consider the case when the data is generated from a mixture of two Gaussians with identical covariances and means that differ in their sign. Formally, we have $P=p_1\mathcal{N}(\mu, \Sigma)+p_{-1}\mathcal{N}(-\mu, \Sigma)$, where $p_1,p_{-1} \in [0,1]$ and $p_1+p_{-1}=1$. $\mcX$ is then $\mathbb{R}^d$. We set the neighborhood function $N(x)=x+\epsilon \Delta$, where $\epsilon$ is the adversarial budget and $\Delta \in \mathbb{R}^d$ is a closed, convex, absorbing and origin-symmetric set. 

Our first lemma proves that the optimal classifier is linear and the corresponding optimal adversarial strategy $z^*$ \footnote{We note that there is a slight abuse of notation here since $z$ in the previous section is a probability and is a perturbation here.} is just a translation of each component of the mixture. To show this, we just establish that these are identical to the solutions obtained in the $0-1$ loss case, allowing us to use Lemma 1 from \cite{bhagoji2019lower}.

\begin{lemma}
The optimal classifier $h^*_y$ minimizing the cross-entropy loss is given by $\frac{1}{1+\exp{(y(w^*)^\intercal x)}}$ where $w^*=2 \Sigma^{-1}(\mu-z^*)$, and $z^*$ is the optimal adversarial strategy given by Lemma 1 of \cite{bhagoji2019lower}.
\end{lemma}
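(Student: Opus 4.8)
The plan is to avoid re-deriving everything from the convex program of Theorem~\ref{thm: log_loss_lower_bound} and instead exhibit a \emph{saddle point} of the min--max cross-entropy game whose two halves coincide with the known $0-1$ solution of Lemma~1 in \cite{bhagoji2019lower}. Concretely, I would fix the adversary to the uniform translation $z^*$ from that lemma, so that the effective conditional distributions become $\mcN(\mu - z^*, \Sigma)$ and $\mcN(-(\mu - z^*), \Sigma)$, and then take $h^*$ to be the Bayes posterior of this shifted mixture. The first step is a one-line Gaussian computation: writing out $\frac{p_1 \mcN(x;\mu-z^*,\Sigma)}{p_1\mcN(x;\mu-z^*,\Sigma)+p_{-1}\mcN(x;-(\mu-z^*),\Sigma)}$, the quadratic terms cancel and the log-odds reduce to $2(\mu-z^*)^\intercal\Sigma^{-1}x$ (up to the prior-dependent constant, which vanishes for balanced priors), giving the sigmoid form $h^*_y(x) = \frac{1}{1+\exp(y(w^*)^\intercal x)}$ with $w^* = 2\Sigma^{-1}(\mu-z^*)$, matching the claim up to the label/sign convention.

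Then I would establish two matching inequalities. For the adversary's guarantee, note that for \emph{any} soft classifier $h$ the worst-case loss dominates the loss against the single fixed strategy $z^*$, and against the $z^*$-shifted mixture the conditional cross-entropy is minimized pointwise by the true posterior; hence $\inf_h \sup_{\mathrm{adv}} \E_P[\tilde\ell^{\text{CE}}(h,v)] \geq \E_{\text{shifted}}[\ell^{\text{CE}}(h^*,\cdot)]$. For the classifier's guarantee I would show the reverse: because $h^*$ is \emph{linear} in $x$, its confidence $h^*(\tilde x)_y$ depends on $\tilde x$ only through $(w^*)^\intercal\tilde x$, so the adversary's best response to $h^*$ is the same translation for every point, and $\sup_{\mathrm{adv}}\E_P[\tilde\ell^{\text{CE}}(h^*,v)]$ is attained by a uniform shift. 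Matching the two bounds forces equality throughout and pins $h^*$ as the minimizer.

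The main obstacle is the classifier's guarantee: I must argue that this best-response translation is exactly the $z^*$ supplied by Lemma~1 of \cite{bhagoji2019lower}, rather than some loss-specific shift. The key observation that unlocks this is that for a linear classifier the worst-case perturbation within $\epsilon\Delta$ only needs to maximize the scalar $-y(w^*)^\intercal(\tilde x - x)$, a support-function computation $\epsilon\,\sigma_\Delta(w^*)$ that is \emph{independent of which monotone loss} (0--1 or cross-entropy) is used. Since both the $0-1$-optimal and the cross-entropy-optimal classifiers share the same weight direction $w^*$, their adversarial best responses coincide, so $(h^*,z^*)$ is simultaneously a saddle point for both games and Lemma~1 of \cite{bhagoji2019lower} transfers verbatim. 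A secondary subtlety to flag is that the convex-program framework of Theorem~\ref{thm: log_loss_lower_bound} was stated for discrete $P$, whereas the Gaussian $P$ is continuous; I would handle this by the direct minimax argument above (which never invokes the finite conflict graph) or by a limiting discretization, and I would note the balanced-prior assumption under which the bias term drops and the stated bias-free form is exact.
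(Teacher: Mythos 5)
Your proposal is correct and follows essentially the same route as the paper's appendix proof: both are minimax sandwich arguments that take the candidate classifier to be the Bayes posterior of the $z^*$-shifted mixture (giving the lower bound) and evaluate that same linear classifier against its worst-case perturbation (giving the upper bound), with the two sides glued together by the support-function optimality condition $\epsilon\|w^*\|_{\Delta}^* = (w^*)^{\top}z^*$ supplied by Lemma~1 of \cite{bhagoji2019lower}. The only cosmetic difference is that the paper matches the two bounds by checking that the induced scalar statistics have identical Gaussian laws (equal means and variances), whereas you match them by observing directly that the worst-case perturbation of $h^*$ realizes exactly the $z^*$-shifted mixture; both reductions rest on the same identity, and your remarks about the bias term for unbalanced priors and the continuous-distribution caveat are consistent with how the paper handles them.
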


The cross-entropy lower bound can then be directly computed.

\begin{theorem} The cross-entropy lower bound for a mixture of two Gaussians
is 
\begin{equation}
    \begin{aligned}
        &\inf_h \mathbb{E}_{P}[\tilde{\ell}^{\text{CE}}(h,v)] \\
    &=p_1\mathbb{E}_{N(\mu-z^*,\Sigma)}[\log(1+\exp{((w^*)^{\intercal}x}))]\\
    &+p_{-1}\mathbb{E}_{N(\mu+z^*,\Sigma)}[\log(1+\exp{-((w^*)^{\intercal}x}))]
    \end{aligned}
\end{equation}

\label{thm: ce_gaussian}
\end{theorem}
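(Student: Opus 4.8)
The plan is to evaluate the infimum by substituting the optimal classifier produced by the preceding lemma and reducing the robust loss to a pair of ordinary Gaussian integrals. First I would use the preceding lemma, together with Theorem~\ref{thm: log_loss_lower_bound}, to assert that the infimum over all measurable $h$ is attained at the linear soft classifier $h^*_y(x) = (1 + \exp(y(w^*)^\intercal x))^{-1}$ with $w^* = 2\Sigma^{-1}(\mu - z^*)$; consequently $\inf_h \mathbb{E}_P[\tilde{\ell}^{\text{CE}}(h,v)] = \mathbb{E}_P[\tilde{\ell}^{\text{CE}}(h^*,v)]$. Since $P$ is the two-component mixture $p_1 \mathcal{N}(\mu,\Sigma) + p_{-1}\mathcal{N}(-\mu,\Sigma)$, I would split this expectation into $p_1$ times the class-$(+1)$ contribution and $p_{-1}$ times the class-$(-1)$ contribution, each of which is an expectation of an inner supremum over the neighborhood $N(x) = x + \epsilon\Delta$.

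Next I would evaluate those inner suprema. Writing $-\log h^*(\tilde{x})_1 = \log(1 + \exp((w^*)^\intercal \tilde{x}))$ and $-\log h^*(\tilde{x})_{-1} = \log(1 + \exp(-(w^*)^\intercal \tilde{x}))$, both integrands are strictly monotone in the linear score $(w^*)^\intercal \tilde{x}$, so the worst-case $\tilde{x} \in x + \epsilon\Delta$ depends only on the direction of $w^*$ and not on the base point $x$. Hence the adversary's best response to $h^*$ is a single translation applied uniformly to every point of a given class, and the preceding lemma identifies this translation with the $0$--$1$-optimal strategy $z^*$ of \cite{bhagoji2019lower}. Applying this translation turns the class-$(+1)$ law into $\mathcal{N}(\mu - z^*, \Sigma)$ and the class-$(-1)$ law into the correspondingly shifted Gaussian, which removes the supremum and leaves each contribution as a plain Gaussian expectation of $\log(1 + \exp(\pm(w^*)^\intercal x))$; collecting the two terms gives the claimed expression.

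The step I expect to be the main obstacle is the middle one: justifying rigorously that the supremum over $N(x)$ is realized by a single, $x$-independent translation and that this translation is exactly the $z^*$ obtained in the $0$--$1$ setting. This is not a routine calculation but the crux that the preceding lemma secures by reducing to Lemma~1 of \cite{bhagoji2019lower}; once that reduction is granted, what remains is only a change of Gaussian mean and the splitting of $\mathbb{E}_P$. Along the way I would also check measurability and integrability of the perturbed loss, so that the decomposition of $\mathbb{E}_P$ and the interchange of the expectation with the attained supremum are legitimate.
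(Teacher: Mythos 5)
Your proposal is correct and rests on the same computation as the paper, but it is organized differently, and the difference is worth spelling out. The paper does not first establish Lemma~3 and then ``plug in'' $h^*$: its appendix proves Lemma~3 and Theorem~\ref{thm: ce_gaussian} \emph{jointly} by a sandwich argument. The upper bound is the adversarial log loss of an arbitrary linear classifier, evaluated exactly as in your middle step --- monotonicity of the loss in the score $y w^\top \tilde{x}$ reduces the inner supremum to $y w^{\top}x - \epsilon\|w\|_{\Delta}^*$, an $x$-independent shift. The lower bound comes from the other side: for any $z \in \epsilon\Delta$ the translated mixture with components $\mathcal{N}(y(\mu-z),\Sigma)$ is a feasible adversarial strategy, so the Bayes log loss of that mixture lower-bounds the optimum over \emph{all} measurable $h$, not just linear ones. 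The two bounds are then matched by choosing $z$ to minimize $(\mu-z)^{\top}\Sigma^{-1}(\mu-z)$ over $\epsilon\Delta$ and $w = 2\Sigma^{-1}(\mu-z)$, at which point Lemma~1 of \cite{bhagoji2019lower} gives $\epsilon\|w\|_{\Delta}^* = w^{\top}z$ and the two score distributions coincide. Your version outsources this entire matching step to Lemma~3 and keeps only the evaluation; that is logically fine given the paper's ordering, and the obstacle you single out (that the worst-case perturbation is the single translation $z^*$) is precisely the identity $\epsilon\|w^*\|_{\Delta}^* = (w^*)^{\top}z^*$ that the matching step supplies. The one thing your plan understates is that ``infimum over all measurable $h$'' needs the Bayes-risk lower bound, not just optimality of $h^*$ among linear classifiers --- if you unfold Lemma~3 you must reproduce that half of the sandwich too.
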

 
 We defer the proofs to Section \ref{appsec: proofs} of the Appendix.

\section{Efficiently Computing Lower Bounds}\label{sec: eff_comp}
In this section, we show that the convex program defined above can be efficiently solved by lower bounding its objective with a linear function and solving a recursive series of linear programs. We develop a specialized algorithm instead of using an off-the-shelf convex program solver in order to exploit the structure in the problem for faster computation.

\subsection{Algorithm overview}
Our algorithm (\textsf{OptProb}) executes the following strategy.
It starts by guessing that there is a single correct-classification probability that should be assigned to all vertices from class 1 and a single probability for vertices from class $-1$.
If this were the case, those probabilities should reflect the relative frequencies of the classes.
The algorithm solves a linear program and either finds a dual certificate proving that the initial guess is correct or a partition of the vertices based on whether the optimal correct-classification probabilities are larger or smaller than the guess.
In the latter case, the algorithm is applied recursively to the two subproblems and their solutions are assembled into a solution to the original problem.
A precise description appears as Algorithm~\ref{alg: opt_classifier_compute}.

\begin{algorithm}[t]
\caption{\textsf{OptProb}}
	\label{alg: opt_classifier_compute}
\begin{algorithmic}[1]
	\Require{Bipartite graph $(\mcA,\mcB, \mcE)$, vertex weights $P$}
	\Ensure{Classifier probabilities $q$, adversarial strategy $z$}
	\State $(\mcA^+,\mcA^-,\mcB^+,\mcB^-,z^{\text{lin}}) = \textsf{LinOpt}(\mcA,\mcB, \mcE,P)$
	\If{$P(\mcA^+)P(\mcB^+) > P(\mcA^-)P(\mcB^-)$}
	\State $\mcE' = \mcE\cap (\mcA^+ \times \mcB^-)$
	\State $\mcE'' = \mcE\cap (\mcA^- \times \mcB^+)$
	\State $(q',z') = \textsf{OptProb}(\mcA^+,\mcB^-,\mcE',P)$
	\State $(q'',z'') = \textsf{OptProb}(\mcA^-,\mcB^+,\mcE'',P)$
	\State $q = v \mapsto \begin{cases} 
	q'_v & v \in \mcA^+ \cup \mcB^-\\ 
	q''_v & v \in \mcA^- \cup \mcB^+ 
	\end{cases}$
	\State $z = e \mapsto \begin{cases} 
	z'_e & e \in (\mcA^+ \times \mcB^-) \cup \mcA^+ \cup \mcB^-\\ 
	z''_e & e \in (\mcA^- \times \mcB^+) \cup \mcA^- \cup \mcB^+\\
	0& \text{otherwise}
	\end{cases}$
	\Else 
	\State $q = v \mapsto \begin{cases} 
	P(\mcA)/P(\mcA \cup \mcB) & v \in \mcA\\ 
	P(\mcB)/P(\mcA \cup \mcB) & v \in \mcB 
	\end{cases}$ \label{q-base-case}
	\State $z = z^{\text{lin}}$
	\EndIf
	\State \Return $(q,z)$
	\end{algorithmic}
\end{algorithm}

The computation of $\textsf{OptProb}$ uses the function $\textsf{LinOpt}$ at each stage of the recursion.
The function $\textsf{LinOpt}(\mcA,\mcB,\mcE,P)$ solves a dual pair of linear programs with variables $y \in \R^{\mcA \cup \mcB}$ and $z \in \R^{(\mcA \times \mcB) \cup \mcA \cup \mcB}$ :
\begin{minipage}{.5\linewidth}
\begin{align*}
  \max r^{\top}y&\\
  y &\geq \zero\\
  M y &\leq \one
\end{align*}
\end{minipage}%
\begin{minipage}{.5\linewidth}
\begin{align*}
  \min \one^{\top}z&\\
  z &\geq \zero\\
  M^{\top}z &\geq r
\end{align*}
\end{minipage}
where $r \in \R^{\mcA \cup \mcB}$ is defined as follows.
If both $P(\mcA) > 0$ and $P(\mcB) >0$, then
\[
  r_v = \begin{cases} 
  P(\{v\})P(\mcA \cup \mcB)/P(\mcA) & v \in \mcA\\ 
  P(\{v\})P(\mcA \cup \mcB)/P(\mcB) & v \in \mcB 
  \end{cases}
\]
and otherwise $r_v = P(\{v\})$.

The primal polytope is the vertex packing polytope of the bipartite graph $(\mcA,\mcB,\mcE)$.
This is integral, so there is some optimal $y \in \{0,1\}^{\mcA \cup \mcB}$.
The sets $\mcA^+,\mcA^-,\mcB^+,\mcB^-$ encode the support of $y$ in a way that is convenient for expressing $\textsf{OptProb}$: $\mcA^+ = \{v \in \mcA : y_v = 1\}$, $\mcA^- = \mcA \setminus \mcA^+$, $\mcB^+ = \{v \in \mcB : y_v = 1\}$, and $\mcB^- = \mcB \setminus \mcB^+$.
The support of $y$ is an independent set: $(\mcA^+ \times \mcB^+) \cap \mcE = \varnothing$.

\subsection{Proof sketch for algorithm optimality}

\begin{theorem}[Convergence to optimal for algorithm]
The proposed Algorithm \ref{alg: opt_classifier_compute} returns the correct optimal classifier probability: the minimizer of \eqref{ce_lower_bound}.
\label{thm: alg_converge}
\end{theorem}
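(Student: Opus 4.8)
The plan is to prove Theorem~\ref{thm: alg_converge} by strong induction on the number of vertices of the conflict graph, showing that the pair $(q,z)$ returned by \textsf{OptProb} satisfies the five sufficient conditions \eqref{q-pos}--\eqref{strong-duality} of Lemma~\ref{lemma: convex-duality}; optimality of $q$ in \eqref{ce_lower_bound} is then immediate. Throughout I would use the two structural facts supplied by \textsf{LinOpt}: the primal polytope is the (integral) vertex-packing polytope, so the returned $y$ is $\{0,1\}$-valued with support $\mcA^+\cup\mcB^+$ an independent set, and the dual optimum $z^{\mathrm{lin}}$ satisfies $z^{\mathrm{lin}}\ge\zero$, $M^{\top}z^{\mathrm{lin}}\ge r$, and $\one^{\top}z^{\mathrm{lin}}=r^{\top}y$ by LP strong duality. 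Note also that the branch condition $P(\mcA^+)P(\mcB^+)>P(\mcA^-)P(\mcB^-)\ge 0$ forces $\mcA^+,\mcB^+$ nonempty, so both recursive subproblems are strictly smaller and the induction is well founded.

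First I would dispatch the base case, where $P(\mcA^+)P(\mcB^+)\le P(\mcA^-)P(\mcB^-)$, $q$ is the within-class guess, and $z=z^{\mathrm{lin}}$. Writing $\alpha=P(\mcA)/P(\mcA\cup\mcB)$ and $\beta=1-\alpha$, conditions \eqref{q-pos} and \eqref{z-pos} are immediate, and \eqref{q-pack} holds because every edge gives $q_u+q_v=\alpha+\beta=1$. The definition of $r$ gives $q_v r_v=p_v$ for all $v$, so $\diag{q}M^{\top}z\ge\diag{q}r=p$, which is \eqref{z-cover}. For \eqref{strong-duality} I would combine $\one^{\top}z^{\mathrm{lin}}=r^{\top}y=P(\mcA\cup\mcB)\bigl(\tfrac{P(\mcA^+)}{P(\mcA)}+\tfrac{P(\mcB^+)}{P(\mcB)}\bigr)$ with the elementary equivalence
\[
P(\mcA^+)P(\mcB^+)\le P(\mcA^-)P(\mcB^-)\iff \frac{P(\mcA^+)}{P(\mcA)}+\frac{P(\mcB^+)}{P(\mcB)}\le 1,
\]
which is exactly the branch condition and yields $\one^{\top}z\le\one^{\top}p$.

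For the inductive (splitting) step, the subproblems partition $\mcV$ into $\mcA^+\cup\mcB^-$ and $\mcA^-\cup\mcB^+$, and \textsf{OptProb} glues the recursively optimal pairs $(q',z')$ and $(q'',z'')$, assigning $z=0$ on the discarded edges of $\mcA^-\times\mcB^-$. Conditions \eqref{q-pos} and \eqref{z-pos} are inherited termwise, and \eqref{strong-duality} follows by additivity, $\one^{\top}z=\one^{\top}z'+\one^{\top}z''\le\one^{\top}p'+\one^{\top}p''=\one^{\top}p$, since the two vertex sets partition $\mcV$. Condition \eqref{z-cover} I would check vertex by vertex: because $\mcA^+\times\mcB^+$ carries no edges and the $\mcA^-\times\mcB^-$ edges carry $z=0$, every vertex's incident positive-$z$ edges lie entirely inside its own subproblem, so $(M^{\top}z)_v$ agrees with the corresponding subproblem quantity and $q_v(M^{\top}z)_v\ge p_v$ is inherited. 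The within-subproblem edges of \eqref{q-pack} are likewise inherited.

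The main obstacle is the remaining part of \eqref{q-pack}: feasibility $q_u+q_v\le 1$ on the discarded edges $(u,v)\in\mcE\cap(\mcA^-\times\mcB^-)$, whose endpoints are solved in different branches and never meet again. A naive threshold invariant (low class-$1$ vertices stay below $\alpha$, low class-$(-1)$ vertices below $\beta$) fails, because a deeper recursion inside a branch can boost an unselected vertex above the parent guess. To close the gap I would instead maintain a paired monotone invariant through the recursion, exploiting that the strict branch condition sharpens the child guesses, $P(\mcA^+)/P(\mcA^+\cup\mcB^-)>\alpha$ and $P(\mcB^+)/P(\mcA^-\cup\mcB^+)>\beta$, together with the complementary slackness of \textsf{LinOpt} — which forces $z^{\mathrm{lin}}_{(u,v)}=0$ and leaves the cover constraints at $u,v$ slack precisely because $y_u=y_v=0$ — to show that a conflicting pair of unselected vertices cannot both be boosted, so their final probabilities still sum to at most one. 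Verifying that this packing/monotonicity structure is preserved under both recursive calls is the technical heart of the argument; once it is in place, \eqref{q-pack} holds on every edge and Lemma~\ref{lemma: convex-duality} certifies that the returned $q$ is the minimizer.
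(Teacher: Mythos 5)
Your overall architecture is the same as the paper's: induct on the number of vertices and verify that the returned pair $(q,z)$ satisfies the certificate conditions \eqref{q-pos}--\eqref{strong-duality} of Lemma~\ref{lemma: convex-duality}. Your base case is correct (the equivalence $P(\mcA^+)P(\mcB^+)\le P(\mcA^-)P(\mcB^-)\iff \tfrac{P(\mcA^+)}{P(\mcA)}+\tfrac{P(\mcB^+)}{P(\mcB)}\le 1$ does give \eqref{strong-duality}), and your inductive handling of \eqref{q-pos}, \eqref{z-pos}, \eqref{z-cover}, \eqref{strong-duality} and of the within-subproblem part of \eqref{q-pack} is fine. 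You have also correctly located the crux: feasibility $q_u+q_v\le 1$ on the discarded edges in $\mcE\cap(\mcA^-\times\mcB^-)$.

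However, at exactly that point the proposal stops being a proof. You assert that the needed ``paired monotone invariant'' can be maintained, but you never state it, and the one concrete mechanism you offer --- complementary slackness of the top-level \textsf{LinOpt} forcing $z^{\mathrm{lin}}_{(u,v)}=0$ on those edges --- cannot close the gap, because $z^{\mathrm{lin}}$ is discarded in the recursive branch and says nothing about the $q$-values produced by the deeper calls. What the paper actually proves (Lemma~\ref{lemma: optprob-props}) is a strengthened induction hypothesis: level functions $a:\mcA\to[k]$, $b:\mcB\to[k]$ such that every edge satisfies $a(u)\le b(v)$, each $q$-value is the mass ratio $P(\mcA_i)/P(\mcA_i\cup\mcB_i)$ of its level, and $q$ is monotone in the level. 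The decisive step is the splice inequality $\tfrac{P(\mcA_{k''-1})}{P(\mcA_{k''-1}\cup\mcB_{k''-1})}\le \tfrac{P(\mcA)}{P(\mcA\cup\mcB)}\le \tfrac{P(\mcA_{k''})}{P(\mcA_{k''}\cup\mcB_{k''})}$, obtained by comparing the \textsf{LinOpt}-optimal independent set $\mcA^+\cup\mcB^+$ against the perturbed independent sets $\mcA^+\cup\mcA_{k''-1}\cup(\mcB^+\setminus\mcB_{k''-1})$ and $(\mcA^+\setminus\mcA_{k''})\cup\mcB^+\cup\mcB_{k''}$ (whose independence itself relies on the edge-ordering property of the sub-solutions). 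This is the content your sketch is missing; note also that, once it is in place, it shows every $u\in\mcA^-$ ends with $q_u\le P(\mcA)/P(\mcA\cup\mcB)$ and every $v\in\mcB^-$ with $q_v\le P(\mcB)/P(\mcA\cup\mcB)$ --- so the ``naive threshold invariant'' you dismiss as failing is in fact true; what fails is only the attempt to prove it by induction without strengthening it to the full level/monotonicity structure.
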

The proof of Theorem 3 mirrors the recursive structure of $\textsf{OptProb}$ and uses induction on the number of vertices. It relies on two technical lemmas (proofs deferred to Section \ref{appsec: alg_proofs} of the Appendix). Lemma 4 establishes properties of the solutions to linear programs that are solved at each iteration.
The proof uses standard duality and complementary slackness arguments for linear programs.

\begin{lemma}
  The function $\textsf{LinOpt}(\mcA,\mcB,\mcE,P)$ produces $(\mcA^+,\mcA^-,\mcB^+,\mcB^-,z)$ with the following properties.
  \begin{outline}[enumerate]
      \1 $P(\mcA^+)P(\mcB^+) \geq P(\mcA^-)P(\mcB^-)$
      \1 If $P(\mcA^+)P(\mcB^+) = P(\mcA^-)P(\mcB^-)$, then
      \2 $\one^{\top}z = P(\mcA \cup \mcB)$, 
      \2$\frac{P(\mcA)}{P(\mcA \cup \mcB)}(M^{\top} z)_v = P(\{v\})$ for all $v \in \mcA$, 
      \2$\frac{P(\mcB)}{P(\mcA \cup \mcB)}(M^{\top} z)_v = P(\{v\})$ for all $v \in \mcB$.
  \end{outline}
\label{lemma: linopt-props}
\end{lemma}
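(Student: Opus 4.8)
The plan is to read both claims off linear-programming duality between the primal packing LP and the dual covering LP solved by $\textsf{LinOpt}$, exploiting that the weight vector $r$ is normalized so that $r(\mcA) = r(\mcB) = P(\mcA \cup \mcB)$. Throughout I would assume $P(\mcA), P(\mcB) > 0$; in the degenerate cases one side of $\mcA$ or $\mcB$ carries no mass, both sides of each inequality vanish, and $r$ reverts to $r_v = P(\{v\})$, so these are checked directly. Write $S = P(\mcA \cup \mcB)$ and recall that $r_v = P(\{v\}) S / P(\mcA)$ on $\mcA$ and $r_v = P(\{v\}) S / P(\mcB)$ on $\mcB$, so that summing over each side gives $r(\mcA) = r(\mcB) = S$.

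First I would prove property~1. The indicator vectors $\ind_{\mcA}$ and $\ind_{\mcB}$ are both primal-feasible, since each side of a bipartite graph is an independent set, and each attains objective value $S$; hence the optimum value satisfies $\mathrm{OPT} \geq S$. On the other hand, the returned integral optimum $\ind_{\mcA^+ \cup \mcB^+}$ has value $r(\mcA^+) + r(\mcB^+) = \frac{S}{P(\mcA)}P(\mcA^+) + \frac{S}{P(\mcB)}P(\mcB^+)$. Combining these gives $\frac{P(\mcA^+)}{P(\mcA)} + \frac{P(\mcB^+)}{P(\mcB)} \geq 1$, and clearing denominators using $P(\mcA) = P(\mcA^+) + P(\mcA^-)$ and $P(\mcB) = P(\mcB^+) + P(\mcB^-)$ rearranges exactly into $P(\mcA^+)P(\mcB^+) \geq P(\mcA^-)P(\mcB^-)$. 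The same computation shows that equality in property~1 is equivalent to $\mathrm{OPT} = S$.

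Next, for property~2 I assume the equality case, so $\mathrm{OPT} = S$. Strong duality immediately gives $\one^{\top} z = \mathrm{OPT} = S = P(\mcA \cup \mcB)$, which is the first bullet. For the remaining two bullets, the key observation is that in this case $\ind_{\mcA}$ and $\ind_{\mcB}$ are themselves both optimal primal solutions, each being feasible with value $S = \mathrm{OPT}$. Since complementary slackness holds between any optimal primal point and the optimal dual $z$ returned by $\textsf{LinOpt}$, pairing $z$ with $\ind_{\mcA}$ forces the dual constraint to be tight on every $v \in \mcA$, and pairing with $\ind_{\mcB}$ forces tightness on every $v \in \mcB$; thus $(M^{\top} z)_v = r_v$ for all $v \in \mcA \cup \mcB$. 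Substituting the definition of $r_v$ and multiplying by $P(\mcA)/S$ on $\mcA$ (respectively $P(\mcB)/S$ on $\mcB$) converts these into the stated identities $\frac{P(\mcA)}{P(\mcA\cup\mcB)}(M^{\top} z)_v = P(\{v\})$ and $\frac{P(\mcB)}{P(\mcA\cup\mcB)}(M^{\top} z)_v = P(\{v\})$.

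The main obstacle, and the one genuinely non-obvious step, is this last one: ordinary complementary slackness applied to the computed solution $y = \ind_{\mcA^+\cup\mcB^+}$ only pins down the dual constraint on its support $\mcA^+ \cup \mcB^+$, which does not reach $\mcA^- \cup \mcB^-$. The trick is that in the equality case the packing LP admits the two extra optima $\ind_{\mcA}$ and $\ind_{\mcB}$, whose supports together cover all of $\mcA \cup \mcB$; running complementary slackness against these is what upgrades tightness from the support of $y$ to every vertex. I would also record the standard fact that any optimal primal and any optimal dual of an LP pair jointly satisfy complementary slackness, a one-line consequence of the chain $r^{\top} y \leq y^{\top} M^{\top} z \leq \one^{\top} z$ collapsing to equalities at optimality, and dispatch the boundary cases $P(\mcA) = 0$ or $P(\mcB) = 0$ separately since the normalization of $r$ differs there.
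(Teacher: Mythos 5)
Your proposal is correct and follows essentially the same route as the paper's proof: both rest on the observation that $\one_{\mcA}$ and $\one_{\mcB}$ are primal-feasible with objective value $P(\mcA\cup\mcB)$, derive Property~1 by comparing this with the value of the returned integral optimum, and obtain Property~2 in the equality case by noting that $\one_{\mcA}$ and $\one_{\mcB}$ are then themselves optimal, so complementary slackness against each of them forces $(M^{\top}z)_v = r_v$ on all of $\mcA\cup\mcB$. Your write-up is, if anything, slightly more explicit than the paper about why tightness propagates beyond the support of the returned $y$ and about the degenerate cases $P(\mcA)=0$ or $P(\mcB)=0$.
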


Lemma 5 establishes that $\textsf{OptProb}$ terminates and describes the structure of the optimal $(q,z)$ in detail.
Both vertex sets are split in partitions with paired parts and each pair of parts behaves similarly to a complete bipartite graph. Edges between pairs of parts are restricted by an order relation. 
\begin{lemma}
\label{lemma: optprob-props}
If $P(\mcA \cup \mcB) > 0$, the computation of $\textsf{OptProb}(\mcA,\mcB,\mcE,P)$ terminates and produces a pair $(q,z)$.
For some $[k] = \{0,1,\ldots,k-1\}$, there are functions $a : \mcA \to [k]$ and $b: \mcB \to [k]$ with the following properties.
\begin{outline}[enumerate]
\1 If $(u,v) \in \mcE$, $a(u) \leq b(v)$.
\1 We have $z \geq 0$, $\one^{\top}z = P(\mcA \cup \mcB)$, and $P(\{v\}) = q_v (M^{\top}z)_v$.
\1 Let $\mcA_{i} = a^{-1}(i)$ and $\mcB_{i} = b^{-1}(i)$.\\
For all $i$, $P(\mcA_{i} \cup \mcB_{i}) > 0$.
For all $u \in \mcA$ and $v \in \mcB$, 
$q_u = \frac{P(\mcA_{a(u)})}{P(\mcA_{a(u)} \cup \mcB_{a(u)})}$ and
$q_v = \frac{P(\mcA_{b(v)})}{P(\mcA_{b(v)} \cup \mcB_{b(v)})}$.

\1 For $u,u' \in \mcA$, if $a(u) \leq a(u')$ then $q_u \leq q_{u'}$.
\end{outline}
\end{lemma}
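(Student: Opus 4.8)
The plan is to prove the statement by strong induction on the number of vertices $|\mcA \cup \mcB|$, following the recursive structure of \textsf{OptProb}. First I would dispatch termination: in the recursive branch the test $P(\mcA^+)P(\mcB^+) > P(\mcA^-)P(\mcB^-)$ forces $P(\mcA^+) > 0$ and $P(\mcB^+) > 0$, so both $\mcA^+$ and $\mcB^+$ are nonempty. Consequently each recursive call $(\mcA^+,\mcB^-,\mcE')$ and $(\mcA^-,\mcB^+,\mcE'')$ has strictly fewer vertices (child~1 misses $\mcB^+$, child~2 misses $\mcA^+$) and still has positive total mass (at least $P(\mcA^+)$ and $P(\mcB^+)$ respectively), so the induction hypothesis applies and the recursion halts. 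For the base case ($k=1$, $a\equiv b\equiv 0$), all four properties are immediate from the definition of $q$ on line~\ref{q-base-case} together with the equality clause of Lemma~\ref{lemma: linopt-props}, which (since the else-branch means $P(\mcA^+)P(\mcB^+)\le P(\mcA^-)P(\mcB^-)$, hence equality) supplies $\one^{\top}z = P(\mcA\cup\mcB)$ and $P(\{v\}) = q_v(M^{\top}z)_v$.

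For the recursive step I would assemble the two child solutions by \emph{concatenating} their level sets: place the $k''$ levels of the low child $(\mcA^-,\mcB^+)$ below the $k'$ levels of the high child $(\mcA^+,\mcB^-)$, i.e.\ set $a(u)=a''(u)$, $b(v)=b''(v)$ on the low child and $a(u)=k''+a'(u)$, $b(v)=k''+b'(v)$ on the high child, with $k=k'+k''$. Property~1 then holds purely combinatorially: edges inside each child inherit the inequality from the hypothesis, edges in $\mcA^+\times\mcB^+$ do not exist because the support of $y$ is independent, and an edge in $\mcA^-\times\mcB^-$ runs from a level $<k''$ to a level $\geq k''$. Property~3 is inherited verbatim, since each combined level is exactly one child level, hence has positive mass and the stated ratio form. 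The key bookkeeping is Property~2: for $v\in\mcA^+$ the only incident edges go to $\mcB^-$ and carry $z'$, so $(M^{\top}z)_v=(M'^{\top}z')_v$, and symmetrically for the other three parts. The edges of $\mcA^-\times\mcB^-$, set to $z=0$, are precisely the ones that would otherwise couple the two children, so zeroing them makes $(M^{\top}z)_v$ collapse onto the correct subproblem and $q_v(M^{\top}z)_v=P(\{v\})$ follows; summing gives $\one^{\top}z=P(\mcA^+\cup\mcB^-)+P(\mcA^-\cup\mcB^+)=P(\mcA\cup\mcB)$.

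Property~4 is where the real work is. With the concatenated labeling it is equivalent to the separation $\max_{u\in\mcA^-}q_u \leq \min_{u\in\mcA^+}q_u$, i.e.\ every leaf ratio produced by the high child dominates every leaf ratio produced by the low child. I would first prove the \emph{aggregate} version: writing $a^{\pm}=P(\mcA^{\pm})$ and $b^{\pm}=P(\mcB^{\pm})$, a short cross-multiplication shows $\frac{a^+}{a^++b^-}\ge\frac{a^-}{a^-+b^+}$ is \emph{equivalent} to $a^+b^+\ge a^-b^-$, which is exactly item~1 of Lemma~\ref{lemma: linopt-props}; the same identity shows the high child's overall ratio is $\ge g$ and the low child's is $\le g$, where $g=P(\mcA)/P(\mcA\cup\mcB)$ is the parent guess. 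It then suffices to strengthen this to the per-leaf claim that every leaf of the high child has ratio $\ge g$ and every leaf of the low child has ratio $\le g$: since $g$ separates them, the concatenation is monotone.

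The main obstacle is exactly this lift from aggregate to per-leaf ratios, because a subproblem whose overall ratio is $\ge g$ can by itself contain leaves below $g$; the bound survives only because of how \textsf{LinOpt} selects. The device I would use is the signed measure $\nu_g(S)=(1-g)P(S\cap\mcA)-g\,P(S\cap\mcB)$, for which a leaf $\ell$ has ratio $\ge g$ iff $\nu_g(\ell)\ge 0$, and this in turn is equivalent to $r(\mcA_\ell)\ge r(\mcB_\ell)$ in the top-level weights ($r_u=P(u)/g$ on $\mcA$, $r_v=P(v)/(1-g)$ on $\mcB$). Since \textsf{LinOpt} chooses the maximum-$r$-weight independent set, the single-edge intuition is clear — it keeps $a$ over $b$ exactly when $r_a\ge r_b$, i.e.\ exactly when that two-vertex leaf has ratio $\ge g$ — and the task is to push this through the nested calls. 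I expect this to require a strengthened induction that threads the \emph{parent} threshold $g$ through the child's own \textsf{LinOpt} (whose guess $g_H\ge g$ reweights $\mcB^-$ upward), showing that this reweighting can never drag a high-child leaf's $r$-mass below balance relative to $g$. Making that monotonicity precise, rather than the routine verification of Properties~1--3, is the crux; once Property~4 is established, Properties~2 and~3 supply most of the hypotheses of Lemma~\ref{lemma: convex-duality} needed downstream.
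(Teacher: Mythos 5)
Your induction scheme, termination argument, base case, level-set concatenation, and verification of Properties 1--3 all match the paper's proof essentially line for line. The gap is Property 4, which you correctly identify as the crux but do not actually prove: you reduce it to the claim that every leaf of the high child has ratio at least $g = P(\mcA)/P(\mcA\cup\mcB)$ and every leaf of the low child has ratio at most $g$, and then propose a ``strengthened induction that threads the parent threshold $g$ through the child's own \textsf{LinOpt},'' which you explicitly leave unfinished. That route is both harder than necessary and, as stated, not an argument --- you would need to reformulate the induction hypothesis of the whole lemma and reprove everything under it, and you give no indication of what the strengthened hypothesis would be or why the child's reweighted \textsf{LinOpt} would respect the parent's threshold.

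The paper closes this gap without any strengthened induction, using a purely local exchange argument at the top level. By the (unstrengthened) induction hypothesis, $q$ is already monotone within each child, so it suffices to compare the two boundary levels $k''-1$ and $k''$ to $g$. Property 1 applied to the low child shows that every edge leaving $\mcA_{k''-1}$ inside $(\mcA^-,\mcB^+)$ lands in $\mcB_{k''-1}$, so $\mcA^+ \cup \mcA_{k''-1}\cup(\mcB^+\setminus\mcB_{k''-1})$ is an independent set in the \emph{parent} graph; since $\mcA^+\cup\mcB^+$ maximizes $r$-weight, the exchange yields $P(\mcA)P(\mcB_{k''-1})\geq P(\mcB)P(\mcA_{k''-1})$, i.e.\ the top level of the low child has ratio at most $g$. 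The symmetric exchange with $(\mcA^+\setminus\mcA_{k''})\cup\mcB^+\cup\mcB_{k''}$ gives $P(\mcA)P(\mcB_{k''})\leq P(\mcB)P(\mcA_{k''})$, i.e.\ the bottom level of the high child has ratio at least $g$. These two inequalities, together with within-child monotonicity, are all of Property 4; your ``aggregate'' cross-multiplication (equivalent to $P(\mcA^+)P(\mcB^+)\geq P(\mcA^-)P(\mcB^-)$) is true but is never needed in this form. You should replace your final paragraph with this exchange argument; everything before it can stand.
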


\begin{proof}[Proof of Theorem~\ref{thm: alg_converge}]
From Lemma~\ref{lemma: optprob-props}, we have that the computation of $\textsf{OptProb}$ terminates and some information about $(q,z) = \textsf{OptProb}(\mcV_1,\mcV_{-1},\mcE,P)$.
Properties 1,3, and 4 together imply \eqref{q-pos} and \eqref{q-pack} (i.e. that $q$ is feasible in \eqref{ce_lower_bound}): for any $(u,v) \in \mcE$, there is some $u'$ such that $a(u') = b(v)$ and $q_u \leq q_{u'} = 1 - q_v$.
Property 2 provides \eqref{z-pos}, \eqref{z-cover}, and \eqref{strong-duality}.
Lemma~\ref{lemma: convex-duality} establishes the optimality of $q$.
\end{proof}

\subsection{Complexity analysis}
In the worst case, at each step of the algorithm, only a single vertex will be removed from one part of the bipartite graph, and the algorithm will only terminate when only singleton parts of the graph remain. In this case, if there are $|\mathcal{V}|$ vertices in the graph, there will be $|\mathcal{V}|$ recursive steps, with each run taking $O(|\mathcal{V}||\mathcal{E}|\log(|\mathcal{V}|^2/|\mathcal{E}|))$ \cite{goldberg1988new}.

\section{Experiments: Using Bounds as a Diagnostic Tool}\label{sec: training_improve}
\begin{figure*}[t]
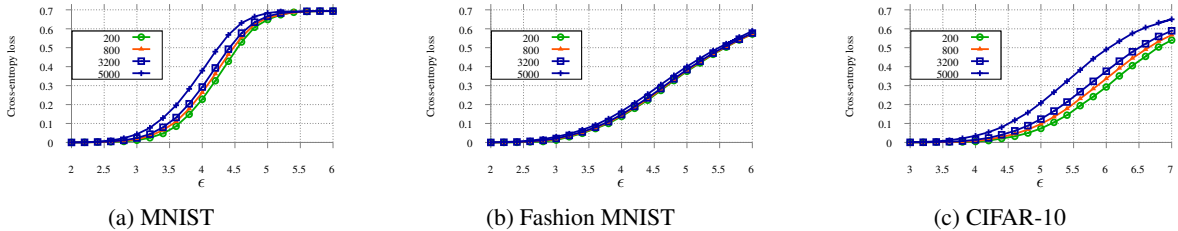

	\centering
	\subfloat[MNIST]{\resizebox{0.32\textwidth}{!}{\input{plots/3_7_mnist_l2_logloss_subsample_.tex}}\label{subfig: mnist_sub}}
	\hspace{0mm}
	\subfloat[Fashion MNIST]{\resizebox{0.32\textwidth}{!}{\input{plots/3_7_fmnist_l2_logloss_subsample_.tex}}\label{subfig: fmnist_sub}}
	\hspace{0mm}
	\subfloat[CIFAR-10]{\resizebox{0.32\textwidth}{!}{\input{plots/3_7_cifar_l2_logloss_subsample_.tex}}\label{subfig: cifar_sub}}
	\caption{Variation in minimum log-loss for an $\ell_2$ adversary with adversarial budget $\epsilon$ and the number of samples from each class. The maximum possible log-loss is $\ln 2$, which is around $0.693$. The total number of samples is $5000$.}
	\label{fig: subsample}
	\vspace{-10pt}
\end{figure*}

In the previous section, we derived lower bounds on the cross-entropy loss that are applicable for all discrete distributions as well as for Gaussian data. In this section, we compute and use these bounds as a diagnostic tool to better understand limits of robust learning for practical datasets and algorithms. We determine lower bounds on the cross-entropy loss for practical datasets of interest. We analyze the runtime of Algorithm \ref{alg: opt_classifier_compute} and show its speedup over the generic non-linear convex solver from CVXOPT \cite{andersen2013cvxopt}. Finally, we uncover a gap between the loss obtained by several robust training methods and the lower bound, and investigate the use of `soft-label' training with optimal classifier outputs to close this gap. All results are obtained on an Intel Xeon cluster with 8 P100 GPUs.

\subsection{Lower bounds on robustness for real-world datasets}
From Theorem \ref{thm: alg_converge} and Algorithm \ref{alg: opt_classifier_compute}, we have an efficient method to compute the optimal log-loss for any empirical distribution. Here, we consider 3 benchmark computer vision datasets: MNIST \cite{lecun1998mnist}, Fashion MNIST \cite{xiao2017/online} and CIFAR-10 \cite{krizhevsky2009learning}. Each of these datasets is originally a 10-class classification problem, and from each, without loss of generality, we choose the `$3$ vs. $7$' classification task as a representative binary classification problem (results for other choices are in Section \ref{appsec: extra_results} of the Appendix). In each case, there are a total of $n=5000$ training samples per class which can be used to compute the lower bound.

To derive a numerical bound, we need to specify the neighborhood function (adversarial constraints). While our bounds are valid for any non-empty neighborhood function, we pick the commonly used $\ell_2$-norm ball constraint, parametrized by its radius $\epsilon$. This has been used numerous times for both attacks \cite{carlini2017towards} and defenses \cite{madry_towards_2017}, and has well-established benchmarks \cite{advbench}. Although $\ell_p$-norm constraints have been critiqued \cite{gilmer2018motivating,evtimov2020security}, we nonetheless choose to use them to provide a point of comparison with existing work.

\noindent \textbf{Algorithm implementation:} We first create the conflict graph by checking for $\ell_2$ ball intersections between all pairs of points from the two classes. The number of vertices $\mcV$ in the conflict graph $\mcG$ is $n_{1}+n_{-1}$. We will generally consider the case when the total number of datapoints in each class is equal, giving $|\mcV|=2n$. The total number of edges $\mcE$ is then $\hat{p}(n,\epsilon)n^2$, where $\hat{p}(n,\epsilon)$ is an estimate of the probability that the neighborhoods around points from the two classes have a non-empty intersection. We find that for the $\ell_2$ norm, $\hat{p}(n,\epsilon)$ increases monotonically with $\epsilon$ and unlike the log-loss, is largely independent of the number of samples (Section \ref{appsec: extra_results} of the Appendix).

Using this conflict graph, represented by a sparse matrix, we use Algorithm \ref{alg: opt_classifier_compute} to compute the lower bound. We use the maximum flow algorithm from Scipy \cite{2020SciPy-NMeth} as \textsf{LinOpt} at the top level and for each recursively obtained split. This implementation uses the Edmonds-Karp \cite{edmonds1972theoretical} algorithm. We note that any linear program solver can be used and casting the problem as maximum flow is not canonical.

\noindent \textbf{Numerical lower bounds}: In Figure \ref{fig: subsample}, we plot the variation in the minimum cross-entropy loss over the full set of $5000$ training samples for all $3$ datasets as the adversary's $\ell_2$ budget is varied. The lower bound is only non-trivial after a budget of around $3.0$ for the MNIST dataset and $4.0$ for the CIFAR-10 dataset. At smaller budgets, the optimal classifier can achieve $0$ loss even in the presence of an adversary. We note that this classifier may not generalize well to test data, since these bounds do not represent the population lower bound over the unknown underlying distribution.


\noindent \textbf{Impact of subsampling}: We also analyze the impact of subsampling from the complete set of samples to understand the dependence of the lower bound on the number of samples. We find that as the number of samples increases, the lower bound increases as well, indicating the presence of more intersections among samples, and thus more flexibility for the adversary.

\begin{figure}[t]
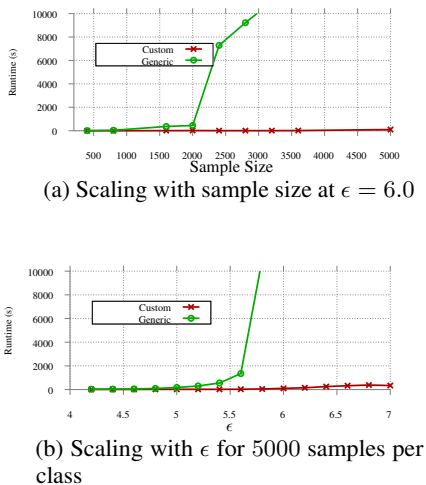

	\centering
	\subfloat[Scaling with sample size at $\epsilon=6.0$]{\resizebox{0.3\textwidth}{!}{	\input{plots/cifar_sample_var_}\label{subfig: time_vs_sample}}}
	\hspace{1pt}
	\subfloat[Scaling with $\epsilon$ for $5000$ samples per class]{\resizebox{0.3\textwidth}{!}{\input{plots/cifar_eps_var_}}\label{subfig: time_vs_eps}}
	\caption{Algorithm runtime comparisons for CIFAR-10}
	\label{fig: timing}
	\vspace{-15pt}
\end{figure}

\noindent \textbf{Empirical runtime comparison:} We compare the runtime of Algorithm~\ref{alg: opt_classifier_compute} using the max-flow solver from Scipy to that of the general purpose solver for convex programs with non-linear objective functions from CVXOPT \cite{andersen2013cvxopt}, which uses primal-dual interior point methods \cite{boyd2004convex}. 

The two parameters that determine the runtime of the algorithms to compute the minimum log-loss are the number of vertices $|\mcV|$ and the adversary's budget $\epsilon$ which controls the graph density. In Figure \ref{subfig: time_vs_sample}, we show the variation in CPU time in seconds as the number of vertices in each class is varied. The mean and standard deviation over 10 runs is reported and the maximum time either algorithm is allowed to run is $10,000$ seconds after which it is terminated. It is clear that our custom algorithm runs significantly faster than the general purpose convex solver, with speed-ups of up to $3000 \times$. The advantages are even starker as $\epsilon$ is varied in Figure \ref{subfig: time_vs_eps}, with the general purpose solver taking in excess of $10,000$ seconds for any budget greater than $5.6$.
We can draw the same conclusions for the other two datasets from the runtime analysis presented in Section \ref{appsec: extra_results} of the Appendix. 

\subsection{Synthetic Gaussian data}
From Section \ref{subsec: gauss_case}, we have a complete characterization of the robust learning problem for Gaussian distributions with respect to the cross-entropy loss. We use this to study the gap between population-level and sample-level cross-entropy lower bounds, finding that \emph{this gap increases with the dimension of the data.} Thus, when the underlying distribution is unknown, sample-level lower bounds must be used carefully, especially with a small number of samples.

We use a diagonal covariance matrix $\Sigma$ with $\Sigma_{ii}$ sampled uniformly between $0$ and $1$, and set $\mu_i=C*\frac{\Sigma_{ii}}{\sqrt{d}}$, where $C$ is a constant determining the distance between the means. The two classes have identical covariances and means of opposite sign. In Figure \ref{fig:syn_gauss_plot}, we compare the lower bound on cross-entropy loss directly obtained from Theorem \ref{thm: ce_gaussian} (`Population loss') and that over the empirical distributions resulting from sampling it ('$k$ samples') for $d=100$. In the latter case, the lower bounds are computed using Algorithm \ref{alg: opt_classifier_compute}. The reason for the lack of intersections at lower budgets for the empirical distribution is that in high dimensions, even when the underlying distributions overlap, further perturbation is needed for intersections between the neighborhoods of sampled points. Results for other choices of $d$ are in Section \ref{appsec: extra_results} of the Appendix.

%

\subsection{Evaluating the performance of robust training}
We now compare the cross-entropy loss obtained by robust training techniques such as adversarial training \cite{madry_towards_2017} and TRADES \cite{zhang2019theoretically} to our lower bounds. We present results on the MNIST \cite{lecun1998mnist} and Fashion-MNIST \cite{xiao2017/online} datasets in the main body, and on CIFAR-10 in Section \ref{appsec: robust_train_extra} of the Appendix.

Our \emph{key takeaways} are i) standard adversarial training can achieve close to the minimum cross-entropy loss with a sufficiently large architecture, but a gap still remains for the $0-1$ loss and, ii) soft label training with optimal probabilities obtained from our framework can help close this gap as well as aid in generalization in some cases.

\begin{figure}
    \centering
    \input{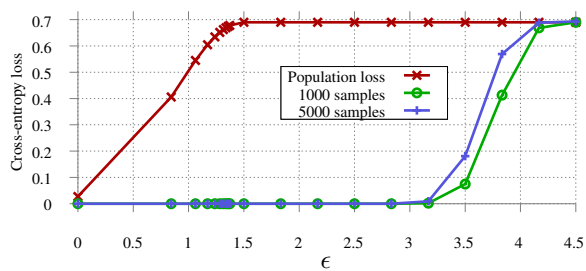}
    \caption{Comparing the population-level and sample-level lower bounds on cross-entropy loss for synthetic $2$-class Gaussian data of dimension $100$.}
    \label{fig:syn_gauss_plot}
    \vspace{-10pt}
\end{figure}



\begin{figure}[t]
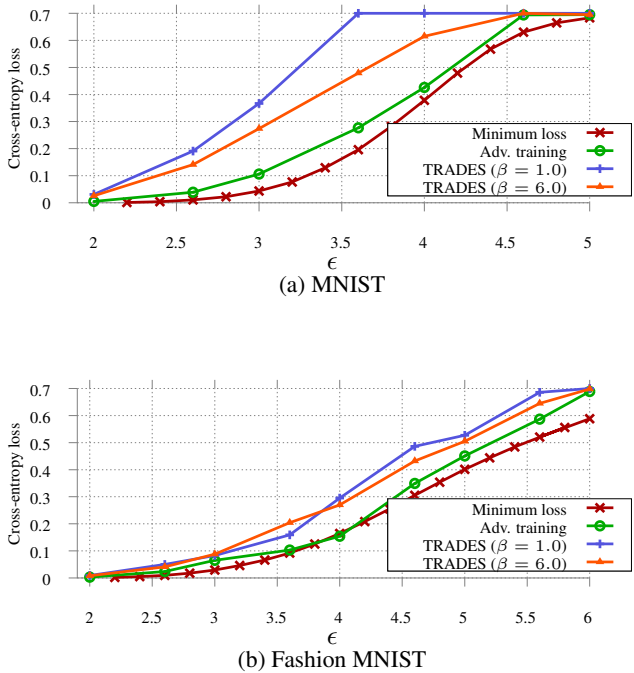

	\centering
	\subfloat[MNIST]{\resizebox{0.48\textwidth}{!}{\input{plots/3_7_mnist_l2_5000_loss_compare_resnet_}}\label{subfig: mnist_KL}}
	\hspace{0mm}
	\subfloat[Fashion MNIST]{\resizebox{0.48\textwidth}{!}{\input{plots/3_7_fmnist_l2_5000_loss_compare_resnet_}}\label{subfig: fmnist_KL}}
	\caption{Comparison on \emph{training data} between the cross-entropy loss (computed using AutoAttack) obtained by different training methods versus the optimal loss.}
	\label{fig: compare_plot_KL}
	\vspace{-15pt}
\end{figure}

\noindent \textbf{Robust training setup.} We train a ResNet-18 network using adversarial training and TRADES, these being the most effective robust training methods for an $\ell_2$ adversary \cite{croce2020robustbench}. The robust cross-entropy loss for these models is computed using the state-of-the-art AutoAttack \cite{croce2020reliable}. Adversarial training, referred to as `hard labels' in Figure~\ref{fig: compare_plot_KL}, utilizes one-hot labels, while TRADES uses the network's own prediction as soft-labels.

\noindent \textbf{How close is the robust training loss of current techniques to optimal?} In Figure \ref{fig: compare_plot_KL}, for both datasets, adversarial training achieves close to the minimum possible cross-entropy loss on the training data. However, TRADES is outperformed by standard adversarial training with hard labels. This runs counter to earlier observations at lower adversarial budgets that TRADES was more robust. In the case of the $0-1$ loss for all datasets and the cross-entropy loss for CIFAR-10, the gap is far larger even for moderate budgets (Section \ref{appsec: robust_train_extra} of the Appendix). Nevertheless, since a gap exists even for the cross-entropy loss, we can rule out the possibility that the gap previously observed for the $0-1$ loss in \citet{bhagoji2019lower} is only due to the use of a surrogate loss.

We also conduct ablation studies with larger networks and smoother activation functions, techniques known to help with robust training. Resnet-101 for FMNIST reduces cross-entropy loss to $0.42$, in comparison to $0.45$ with ResNet-18, which is close to the optimal loss for Fashion MNIST at $\epsilon=5.0$. Additionally, with over $15$ different activation functions, we did not observe any significant drop in cross-entropy loss compared to the standard ReLU. Further details and results are in Section \ref{appsec: robust_train_extra} of the Appendix.

\noindent \textbf{Using soft labels.} Training using soft labels is known to improve the performance of deep neural networks~\cite{zheng2016stability}. Since at higher values of $\epsilon$, the optimal classifier may assign a higher probability to the opposite class as the true label, the obtained soft labels are noisy. To avoid introducing this label noise, while also extracting meaningful gradients, we impose a lower bound on the probability of the correct class (details in Section \ref{appsec: robust_train_extra} of the Appendix). We find that training with these clipped soft-labels can reduce the cross-entropy loss by a significant margin (Table \ref{tab:clip_soft_labels}). Additionally, this method can also improve the $0-1$ loss for the MNIST datasets for a range of budgets (Section \ref{appsec: robust_train_extra} of the Appendix). Overall, these results indicate that appropriately calibrated soft label training can help with robustness.

\section{Related Work}\label{sec: rel_work}

We only discuss the closest related work here on theoretical analysis of test-time adversaries and robust training. Extensive surveys \cite{papernot2016towards,liu2018survey,biggio2017wild,li2020sok} provide a broader overview.

\begin{table}[t]
\centering
\caption{Comparison of train and test set robust accuracy with different robust training techniques for the FMNIST dataset.}
\label{tab:clip_soft_labels}
\resizebox{\linewidth}{!}{
\begin{tabular}{ccccccc}
\toprule
 &  & \multicolumn{2}{c}{FMNIST ($\epsilon=4.6$)} &  & \multicolumn{2}{c}{FMNIST ($\epsilon=5.0$)} \\ \cmidrule{3-4} \cmidrule{6-7}
 &  & Train & Test &  & Train & Test \\ \midrule
Hard labels &  & $0.349$ & $0.348$ &  & $0.451$ & $0.451$ \\
Clipped soft labels &  & $0.326$ & $0.331$ &  & $0.419$ & $0.420$ \\
Optimal &  & 0.305 & -- &  & 0.401 & -- \\ \bottomrule
\end{tabular}}
\vspace{-10pt}
\end{table}

\noindent \textbf{Information-theoretic limits on robust learning.} All previous work on information-theoretic limits on robust learning has focused on the $0-1$ loss. \cite{pmlr-v97-dohmatob19a} and \cite{mahloujifar2019curse} use the `blowup' property of specific data distributions to determine bounds on the robust loss, given some level of loss on benign data. \cite{bhagoji2019lower} and \cite{pmlr-v119-pydi20a} use optimal transport to provide lower bounds on the robust loss for a general class of distributions, without a dependence on the loss on benign data. While \cite{pmlr-v119-pydi20a} does consider convex losses, we are the first to provide an explicit method and framework, as well as numerical results, for the cross-entropy loss.

\noindent \textbf{Generalization for adversarially robust learning.} A number of papers analyze the sample complexity of robust learning for specific distributions of interest such as Gaussians \cite{schmidt2018adversarially,pmlr-v125-javanmard20a,pmlr-v119-dan20b}, uniform \cite{diochnos2018adversarial} and spherical \cite{gilmer2018adversarial}. The sample complexity of PAC-learning (worst case over distributions) for robust classifiers has also been derived \cite{cullina2018pac,yin2018rademacher,montasser2019vc}. However, this line of work does not analyze the minimum possible loss, only the gap between the minimum and learned.

\noindent \textbf{Computational limits of robust learning.} Computationally bounded adversaries \cite{pmlr-v117-garg20a} were considered to devise instances where there is a separation between their power and that of unbounded adversaries. Other work \cite{bubeck2018adversarial,awasthi2019robustness, pmlr-v119-montasser20a} has focused on instances where computationally efficient robust learning is possible.

\noindent \textbf{Robust training of neural networks.} Adversarial training~\cite{madry_towards_2017} with follow-up improvements in TRADES~\cite{zhang2019theoretically}, remains the most successful robust training technique. Its performance is further improved with larger networks~\cite{gowal2020uncovering}, smooth activations~\cite{xie2020smoothadv}, early stopping~\cite{rice2020overfitting}, and careful tuning of weight decay~\cite{pang2021bagoftricks}. Some other works investigate the effect of weight perturbation~\cite{wu2020advweightperturb}, weight averaging~\cite{gowal2020uncovering}, sub-networks on robustness~\cite{sehwag2020hydra} and additional data \cite{carmon2019unlabeled}. \citet{Wang2020revisit} further demonstrate minor improvements in robustness with sample-weighted adversarial training. \citet{goibert2019adversarial} show that the use of smoothed labels can aid with robustness. However, we note that unlike this paper, they do not use the labels from the optimal classifier. For a detailed comparison of state-of-the-art robust training techniques, we refer the reader to RobustBench~\cite{croce2020robustbench}. 


\section{Discussion}\label{sec: discussion}
In this paper, we have provided a framework to compute optimal lower bounds on the cross-entropy loss for general discrete distributions as well as Gaussian mixtures. We showed how to leverage this framework to analyze current robust training methods. In future work on the theoretical front, we plan to extend our framework to all continuous probability distributions as well as the multi-class case. On the empirical front, we aim to further investigate the convergence of robust training for complex datasets such as CIFAR-10, as well as to use our framework to guide the generation of more robust feature representations. 



\section*{Acknowledgements}
This work was supported in part by the National Science Foundation under grants CNS-1553437, CNS-1704105 and CNS-1949650, the DARPA GARD program, the ARL’s Army Artificial Intelligence Innovation Institute (A2I2), the Office of Naval Research Young Investigator Award, the Army Research Office Young Investigator Prize, a faculty research award from Facebook, the Schmidt DataX award, and Princeton E-ffiliates Award. Any opinions, findings, and conclusions or recommendations expressed in this material are those of the authors and do not
necessarily reflect the views of any funding agencies.

\bibliography{main_arxiv}
\bibliographystyle{icml2020}

\clearpage

 \appendix

\section{Proof for Lemma 3 and Theorem 2} \label{appsec: proofs}
We consider the case when the data is generated from a mixture of two Gaussians with identical covariances and means that differ in their sign. Formally, we have $P_Y(1) = p_1$, $P_Y(-1) = p_{-1}$, and $P_{X|Y=y}=\mathcal{N}(y\mu, \Sigma)$.
$\mcX$ is then $\mathbb{R}^d$. We set the neighborhood function $N(x)=x+\epsilon \Delta$, where $\epsilon$ is the adversarial budget and $\Delta \in \mathbb{R}^d$ is a closed, convex, absorbing and origin-symmetric set. 




\begin{proof}
Let $c = \log\frac{p_1}{p_{-1}}$ so $yc = \log\frac{p_y}{p_{-y}}$.
Let $w \in \R^d$ and consider the classifier 
\begin{align*}
  h(x)_y &= \frac{1}{1+\exp(-y (w^{\top}x + c))}\\ 
  &= \frac{p_y\exp(\frac{y}{2} w^{\top}x)}
  {p_y\exp(\frac{y}{2} w^{\top}x)+p_{-y}\exp(\frac{-y}{2} w^{\top}x)}.
\end{align*}
The output probability $h(x)_y$ is an increasing function of $yw^{\top}x$, so we can find $q_{(x,y)}=\inf_{\tilde{x} \in N(x)} h(\tilde{x})_y$ by computing $\inf_{\tilde{x} \in N(x)} yw^{\top}\tilde{x} = yw^{\top}x - \sup_{z \in \epsilon\Delta} w^{\top}z = yw^{\top}x - \epsilon\|w\|_{\Delta}^*$.
Thus adversarial log loss of this classifier is
\[
\sum_y p_y \E_{X \sim \mathcal{N}(y\mu,\Sigma)} \log (1+\exp(-y (w^{\top}X + c) + \epsilon\|w\|_{\Delta}^*))
\]
where $X \sim \mathcal{N}(y\mu,\Sigma)$ and this is an upper bound on the optimal adversarial log loss.
Observe that \[
yw^{\top}X - \epsilon\|w\|_{\Delta}^* \sim \mathcal{N}(w^{\top}\mu - \epsilon\|w\|_{\Delta}^*,w^{\top}\Sigma w).
\]

For any $z \in \Delta$, the distributions $P_{\tilde{X}|Y=y}=\mathcal{N}(y(\mu-z), \Sigma)$ are clearly feasible for the adversary.
The Bayes classifier for these is 
\begin{align*}
  h(x)_y 
  &= \frac{1}{1+\exp(-y (2(\mu-z)^{\top}\Sigma^{-1}x + c))}.
\end{align*}
The log loss of this classifier is
\[
\sum_y p_y \E \log (1+\exp(-y (2(\mu-z)\Sigma^{-1}X + c)))
\]  
where $X \sim \mathcal{N}(y(\mu-z),\Sigma)$ and this is an lower bound on the optimal adversarial log loss.
Observe that 
\begin{multline*}
2y(\mu-z)^{\top}\Sigma^{-1}X \sim\\ \mathcal{N}(2(\mu-z)^{\top}\Sigma^{-1}(\mu-z),4(\mu-z)^{\top}\Sigma^{-1}(\mu-z)).
\end{multline*}
If we can find $w$ and $z$ such that
\begin{align*}
    w^{\top}\mu - \epsilon\|w\|_{\Delta}^* &= 2(\mu-z)^{\top}\Sigma^{-1}(\mu-z)\\
    w^{\top}\Sigma w &= 4(\mu-z)^{\top}\Sigma^{-1}(\mu-z),
\end{align*}
then these upper and lower bounds match.

Using Lemma 1 from \cite{bhagoji2019lower},
if we take $z$ to be the solution to optimization problem
\[
    \min (\mu-z)^{\top}\Sigma^{-1}(\mu-z) \text{ s.t. } z \in \epsilon\Delta
\]
and $w = 2\Sigma^{-1}(\mu-z)$,
then $\epsilon\|w\|_{\Delta}^* = w^{\top} z$, which immediately implies the desired equalities.
\end{proof}

\section{Proofs for Algorithm 1}\label{appsec: alg_proofs}

\subsection{Proof of Lemma 4}
\begin{proof}
Because each edge contains exactly one vertex in each of $\mcA$ and $\mcB$, $M \one_{\mcA} = \one_{\mcE \cup \mcA}$ and $M \one_{\mcB} = \one_{\mcE \cup \mcB}$.
This gives two feasible choices for $y$: $y = \one_{\mcA}$ and $y = \one_{\mcB}$.
By construction of $r$, at least one of these achieves a value of $P(\mcA \cup \mcB)$.
If $P(\mcA)>0$ then
\[
  r^{\top}\one_{\mcA} = \sum_{v \in \mcA} \frac{P(\mcA \cup \mcB)}{P(\mcA)} p_v =  P(\mcA \cup \mcB)
\]
and if $P(\mcB)>0$ then $r^{\top}\one_{\mcB} = P(\mcA \cup \mcB)$.
Complementary slackness implies $(M^{\top}z - r)^{\top}y = 0$ for all optimal $y$,
and thus $(M^{\top}z - r)_v = 0$ for all $v$ that are nonzero in some optimal $y$.
If the feasible points $y = \one_{\mcA}$ and $y = \one_{\mcB}$ are optimal, then $(M^{\top} z)_v = r_v$ for all $v \in \mcA$ if $P(\mcA) > 0$ and for all $v \in \mcB$ if $P(\mcB) > 0$.
Then Property 2 follows from the definition of $r$.

By strong linear programming duality, we always find $z$ and $y$ such that $\one^{\top} z = r^{\top} y$.
If the candidate choices of $y$ described above are optimal, we satisfy the first alternative of the claim.
Otherwise, we have $y$ such that $r^{\top} y > \one^{\top} p$.
We have
\begin{align*}
  r^{\top}(\one_{\mcA_+} + \one_{\mcB_+}) &> \one^{\top} p\\ 
  \frac{P(\mcA \cup \mcB)P(\mcA^+)}{P(\mcA)} + \frac{P(\mcA \cup \mcB)P(\mcB^+)}{P(\mcB)} &>  P(\mcA \cup \mcB)\\
  P(\mcA^+)P(\mcB)+P(\mcB^+)P(\mcA) &> P(\mcA)P(\mcB)\\
  P(\mcA^+)P(\mcB^+) &> P(\mcA^-)P(\mcB^-)
\end{align*}
which establishes Property 1.
\end{proof}

\subsection{Proof of Lemma 5}



\begin{proof}
In the base case of the induction, the output of $\textsf{OptProb}$ comes from the second branch, the computation terminates, and $P(\mcA^+)P(\mcB^+) \leq P(\mcA^-)P(\mcB^-)$.
We take $k=1$, so $\mcA = \mcA_0$ and $\mcB = \mcB_0$.
From Property 1 of Lemma 4, $P(\mcA^+)P(\mcB^+) = P(\mcA^-)P(\mcB^-)$ and thus from Property 2a we have $\one^{\top}z = P(\mcA \cup \mcB)$
Then $q$ is specified by Line \ref{q-base-case} and satisfies Property 3 by construction.
Properties 2b and 2c of Lemma 4 implies $q_v(M^{\top} z)_v = P(\{v\})$, so Property 2 is established.
Properties 1 and 4 hold trivially when $k=1$.

In the inductive case, the output of $\textsf{OptProb}$ comes from the first branch.
Because $P(\mcA^+)P(\mcB^+) > 0$, both $\mcA^+$ and $\mcB^+$ are nonempty.
Thus $|\mcA^+ \cup \mcB^-| < |\mcA \cup \mcB|$ and $|\mcA^- \cup \mcB^+| < |\mcA \cup \mcB|$, so the recursive calls both involve strictly smaller vertex sets.
By induction, both recursive calls terminate.
Suppose that $(q',z')$ and $(q'',z'')$ satisfy the four properties with functions $a' : \mcA^+ \to [k']$ and $b' : \mcB^- \to [k']$ and $a'' : \mcA^- \to [k'']$ and $b'' : \mcB^+ \to [k'']$ respectively.
Then we take $k = k'+k''$ and define $a$ and $b$ in the following piecewise fashion:
\begin{align*}
a(u) &= \begin{cases}
a'(u)+k'' & u \in \mcA^+\\
a''(u) & u \in \mcA^-\\
\end{cases}\\
b(u) &= \begin{cases}
b'(u)+k'' & u \in \mcB^-\\
b''(u) & u \in \mcB^+\\
\end{cases}
\end{align*}
Because $\mcA^+ \cup \mcB^+$ is an independent set, there are no edges $(u,v)$ with $a(u) \geq k'' > b(v)$.
Along with the induction hypotheses, this established Property 1.
The piecewise definitions of $q$ in line 7 and $z$ in line 8 satisfy Properties 2 and 3 because $(q',z')$ and $(q'',z'')$ do.

Property 4 requires a bit of calculation.
The set $\mcA^+ \cup \mcA_{k''-1} \cup (\mcB^+ \setminus \mcB_{k''-1})$ is an independent set and from the properties of $\textsf{LinOpt}$
\begin{multline*}
P(\mcB)P(\mcA^+) + P(\mcA)P(\mcB^+) \geq\\ P(\mcB)(P(\mcA^+) + P(\mcA_{k''-1})) + P(\mcA)(P(\mcB^+) - P(\mcB_{k''-1}))
\end{multline*}  
so $P(\mcA)P(\mcB_{k''-1}) \geq P(\mcB)P(\mcA_{k''-1})$.
Similarly, $(\mcA^+ \setminus \mcA_{k''}) \cup \mcB^+ \cup \mcB_{k''}$ is an independent set and 
\begin{multline*}
P(\mcB)P(\mcA^+) + P(\mcA)P(\mcB^+) \geq\\ P(\mcB)(P(\mcA^+) - P(\mcA_{k''})) + P(\mcA)(P(\mcB^+) + P(\mcB_{k''}))
\end{multline*}  
so $P(\mcA)P(\mcB_{k''}) \leq P(\mcB)P(\mcA_{k''})$.
Combining these inequalities, we have 
\[
\frac{P(\mcA_{k''-1})}{P(\mcA_{k''-1} \cup \mcB_{k''-1})} \leq 
\frac{P(\mcA)}{P(\mcA \cup \mcB)} \leq
\frac{P(\mcA_{k''})}{P(\mcA_{k''} \cup \mcB_{k''})}.
\]
Along with the induction hypotheses, this establishes Property 4.
\end{proof}

\section{Additional Results}\label{appsec: extra_results}
In this section we present additional results that were omitted from the main body of the paper for space considerations.

\subsection{Other class pairs}
In Figures \ref{suppfig: subsample_1_9} and \ref{suppfig: subsample_2_8}, we present the results for the lower bound on cross-entropy loss for two other choices of class pairs, `1 vs. 9' and '2 vs. 8'. We can see that while the exact values of the lower bound differ, the trend with respect to both the adversarial budget and the number of samples is the same as in the `3 vs. 7' case.

\begin{figure*}[t]
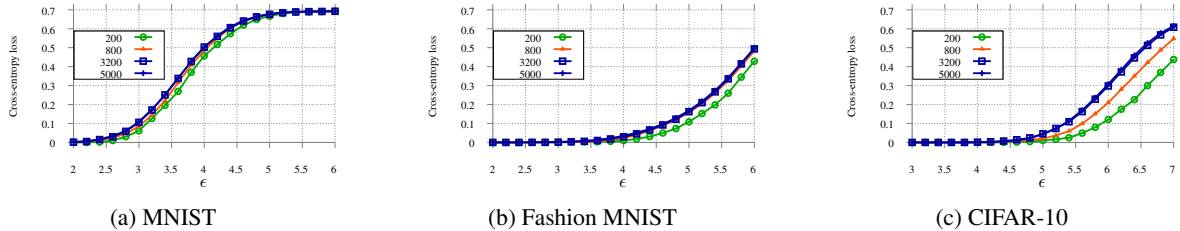

	\centering
	\subfloat[MNIST]{\resizebox{0.32\textwidth}{!}{\input{plots/1_9_mnist_l2_logloss_subsample_.tex}}\label{subfig: mnist_sub_1_9}}
	\hspace{0mm}
	\subfloat[Fashion MNIST]{\resizebox{0.32\textwidth}{!}{\input{plots/1_9_fmnist_l2_logloss_subsample_.tex}}\label{subfig: fmnist_sub_1_9}}
	\hspace{0mm}
	\subfloat[CIFAR-10]{\resizebox{0.32\textwidth}{!}{\input{plots/1_9_cifar_l2_logloss_subsample_.tex}}\label{subfig: cifar_sub_1_9}}
	\caption{\textbf{Two class problem is `1 vs. 9'}. Variation in minimum log-loss for an $\ell_2$ adversary with adversarial budget $\epsilon$ and the number of samples from each class. The maximum possible log-loss is $\ln 2$, which is around $0.693$. The total number of samples is $5000$.}
	\label{suppfig: subsample_1_9}
	\vspace{-10pt}
\end{figure*}

\begin{figure*}[t]
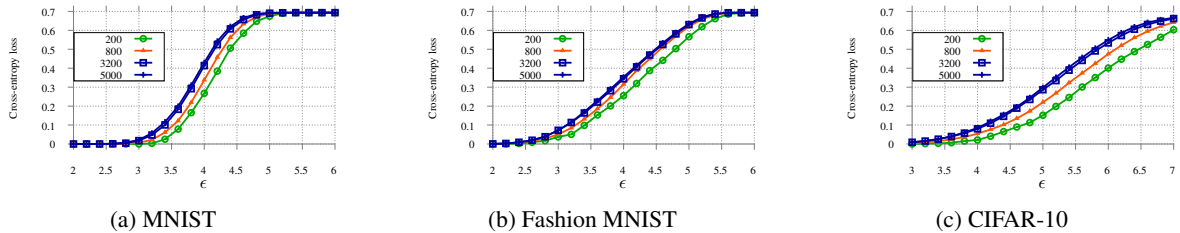

	\centering
	\subfloat[MNIST]{\resizebox{0.32\textwidth}{!}{\input{plots/2_8_mnist_l2_logloss_subsample_.tex}}\label{subfig: mnist_sub_2_8}}
	\hspace{0mm}
	\subfloat[Fashion MNIST]{\resizebox{0.32\textwidth}{!}{\input{plots/2_8_fmnist_l2_logloss_subsample_.tex}}\label{subfig: fmnist_sub_2_8}}
	\hspace{0mm}
	\subfloat[CIFAR-10]{\resizebox{0.32\textwidth}{!}{\input{plots/2_8_cifar_l2_logloss_subsample_.tex}}\label{subfig: cifar_sub_2_8}}
	\caption{\textbf{Two class problem is `2 vs. 8'}. Variation in minimum log-loss for an $\ell_2$ adversary with adversarial budget $\epsilon$ and the number of samples from each class. The maximum possible log-loss is $\ln 2$, which is around $0.693$. The total number of samples is $5000$.}
	\label{suppfig: subsample_2_8}
	\vspace{-10pt}
\end{figure*}

\subsection{Graph properties}
We show the variation in collision probability with the budget for different numbers of samples per class in Figure \ref{fig:cifar-10_collide}. This quantity can be estimated accurately even with a small number of samples, unlike the lower bound on cross-entropy.

\begin{figure}
    \centering
    \includegraphics[width=\columnwidth]{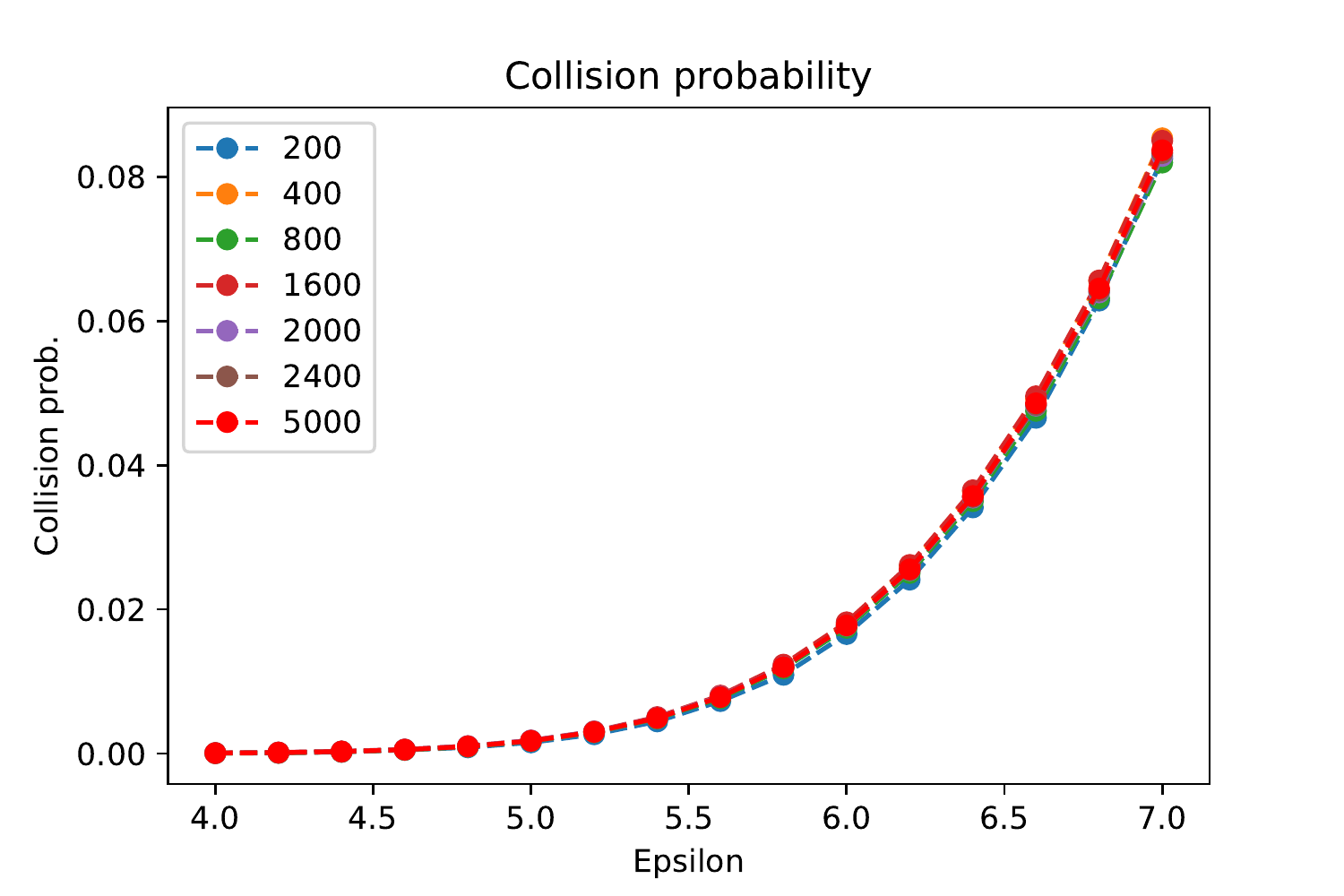}
    \caption{Variation in collision probability with attacker budget $\epsilon$ for the CIFAR-10 dataset}
    \label{fig:cifar-10_collide}
\end{figure}

\subsection{Runtime analysis for other datasets}
In Figures \ref{suppfig: timing_mnist} and \ref{suppfig: timing_fmnist}, we show the variation in runtime for the algorithms to compute the lower bound on cross-entropy loss for the MNIST and Fashion MNIST datasets. Our custom Algorithm (Algorithm 1 in the main body) clearly outperforms the generic convex solver from CVXOPT.

\begin{figure}[t]
	\centering
	\subfloat[Scaling with sample size at $\epsilon=3.8$]{\resizebox{0.3\textwidth}{!}{	\input{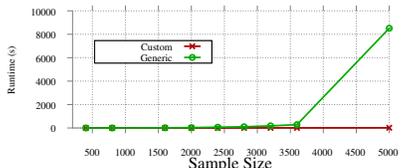}\label{subfig: time_vs_sample_mnist}}}
	\hspace{1pt}
	\subfloat[Scaling with $\epsilon$ for $5000$ samples per class]{\resizebox{0.3\textwidth}{!}{\input{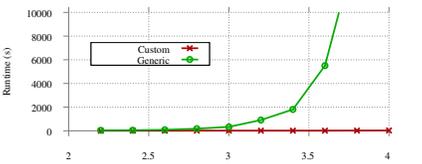}}\label{subfig: time_vs_eps_mnist}}
	\caption{Algorithm runtime comparisons for MNIST}
	\label{suppfig: timing_mnist}
\end{figure}

\begin{figure}[t]
	\centering
	\subfloat[Scaling with sample size at $\epsilon=4.0$]{\resizebox{0.3\textwidth}{!}{	\input{plots/fmnist_sample_var_}\label{subfig: time_vs_sample_fmnist}}}
	\hspace{1pt}
	\subfloat[Scaling with $\epsilon$ for $5000$ samples per class]{\resizebox{0.3\textwidth}{!}{\input{plots/fmnist_eps_var_}}\label{subfig: time_vs_eps_fmnist}}
	\caption{Algorithm runtime comparisons for Fashion MNIST}
	\label{suppfig: timing_fmnist}
\end{figure}

\begin{figure}[t]
	\centering
	\subfloat[MNIST]{\resizebox{0.48\textwidth}{!}{\input{plots/3_7_mnist_l2_5000_01_loss_compare_resnet_}}\label{subfig: mnist_KL_01}}
	\hspace{0mm}
	\subfloat[Fashion MNIST]{\resizebox{0.48\textwidth}{!}{\input{plots/3_7_fmnist_l2_5000_01_loss_compare_resnet_}}\label{subfig: fmnist_KL_01}}
	\caption{Comparison on \emph{training data} between the $0-1$ loss obtained by different training methods (computed using AutoAttack) versus the optimal loss.}
	\label{fig: compare_plot_01}
	\vspace{-10pt}
\end{figure}

\begin{figure}[t]
	\centering
	\subfloat[MNIST]{\resizebox{0.48\textwidth}{!}{\input{plots/3_7_mnist_l2_5000_01_loss_compare_resnet_test_}}\label{subfig: mnist_KL_test}}
	\hspace{0mm}
	\subfloat[Fashion MNIST]{\resizebox{0.48\textwidth}{!}{\input{plots/3_7_fmnist_l2_5000_01_loss_compare_resnet_test_}}\label{subfig: fmnist_KL_test}}
	\caption{Comparison on \emph{test data} between the $0-1$ loss obtained by different training methods (computed using AutoAttack) versus the optimal loss.}
	\label{fig: compare_plot_01_test}
	\vspace{-10pt}
\end{figure}

\subsection{Further Gaussian results}
In Table \ref{fig:syn_gauss_plot_d2}, we show the variation in the population- and sample-level lower bounds on the cross-entropy loss for data generated from a 2-class Gaussian mixture with $d=2$. All other parameters are the same as in Section 4.2 of the main paper. We can see that for lower dimensional data, the gap between the bounds is small.

\begin{figure}[t]
    \centering
    \input{plots/syn_gauss_d2_}
    \caption{Comparing the population-level and sample-level lower bounds on cross-entropy loss for synthetic $2$-class Gaussian data of dimension $2$.}
    \label{fig:syn_gauss_plot_d2}
    \vspace{-10pt}
\end{figure}

\subsection{Minimum $0-1$ loss}
 We note that the optimal classifier probabilities that are obtained in the course of determining the minimum log-loss can be thresholded to obtain the classification outcomes of the optimal classifier. Care must be taken, however, for data points where the optimal probability is $\frac{1}{2}$ in the two class case. For all points of this type, we just classify them as being in class $1$, which avoids any conflicts and recovers the numerical values from previous work \cite{bhagoji2019lower}. These bounds are plotted in Figure \ref{fig: compare_plot_01} as the line `Minimum loss'.

\section{Further Results on Robust Training} \label{appsec: robust_train_extra}

\subsection{Robust $0-1$ loss}
 We also compare the minimum possible $0-1$ loss to that obtained by various robust training methods using AutoAttack \cite{croce2020reliable} for both training (Figure \ref{fig: compare_plot_01}) and test (Figure \ref{fig: compare_plot_01_test}) data. We find that robust training using optimal clipped soft labels can outperform standard hard label training, and that TRADES performs poorly at higher adversarial budgets.



\subsection{Ablation}
\noindent \textbf{Activation functions:} In Table \ref{tab: activation}, we study the variation in training and test cross-entropy loss with the activation functions used in a ResNet-18. We find at a budget of $3.0$ for MNIST, the standard ReLU activation function performs the best, justifying our choice of this activation function throughout. For the ELU and Tanh activation functions, the network is unable to converge, implying that not all activation functions perform well at higher budgets.

\begin{table}[t]
\centering
\begin{tabular}{@{}lll@{}}
\toprule
Activation function & Robust train loss & Robust test loss \\ \midrule
ReLU       & 0.106      & 0.236     \\
ELU        & 1.056      & 1.060     \\
Tanh       & 13.012     & 13.099    \\
Leaky ReLU & 0.103      & 0.348     \\
SELU       & 0.704      & 0.706 \\  \bottomrule
\end{tabular}
\caption{Variation in train and test loss for a ResNet-18 trained on MNIST with an $\ell_2$ norm adversary with $\epsilon=3.0$}
\label{tab: activation}
\end{table}

\noindent \textbf{Architecture:} We also experimented with different ResNet architectures to test if increasing the size of the network would lead to lower values of the robust cross-entropy loss. However, in Table \ref{tab: arch}, we find that while the loss varies across architectures, an increase in size is not guaranteed to even lower the training loss.

\begin{table}[t]
\centering
\begin{tabular}{@{}lll@{}}
\toprule
Architecture & Robust train loss & Robust test loss \\ \midrule
ResNet-18    & 0.451             & 0.451            \\
ResNet-50    & 0.387             & 0.387            \\
ResNet-101   & 0.422             & 0.425          \\ \bottomrule
\end{tabular}%
\caption{Variation in train and test loss for models trained on Fashion MNIST with an $\ell_2$ norm adversary with $\epsilon=5.0$}
\label{tab: arch}
\end{table}

\subsection{CIFAR-10 robust training}
We robustly train a ResNet-18 on the CIFAR-10 dataset using $\ell_2$ budgets of $\epsilon=1.0$ and $2.0$. We find that at $\epsilon=1.0$, the training loss with both adversarial training and TRADES goes to $0$, but the test loss is around $1$, with a robust classification accuracy of just above $50\%$, implying that some robust learning is just about possible.

When the budget increases to $\epsilon=2.0$, the network has below $50\%$ robust classification accuracy on the test set for both training methods. Thus, the performance of current robust classifiers is very far from the optimal cross-entropy lower bound of $0.0$ at both these budgets.



\end{document}